\documentclass{amsart}

\usepackage{hyperref}
\bibliographystyle{plain}
\usepackage{amsmath, amssymb,amsthm}
\usepackage{stmaryrd}
\usepackage{epsfig,calc, graphicx}
\usepackage{wasysym}
\usepackage[usenames]{color}
\usepackage{graphicx}
\usepackage{subcaption}
\usepackage{multirow}
\usepackage{booktabs}
\usepackage{hyperref}
\usepackage{caption}
\usepackage{todonotes}
\usepackage{array}
\usepackage{tabulary}
\usepackage{algorithmic}
\usepackage[boxed]{algorithm2e}
\usepackage{ulem}
\setlength{\textfloatsep}{7pt}

\newlength\myindent
\setlength\myindent{3.5em}
\newcommand\bindent{%
  \begingroup
  \setlength{\itemindent}{\myindent}
  \addtolength{\algorithmicindent}{\myindent}
}
\newcommand\eindent{\endgroup}
\DontPrintSemicolon
\SetKwProg{Fn}{Function}{:}{}
\SetKwComment{Comment}{$\triangleright$\ }{ }
\usepackage{subcaption}

\DeclareMathOperator{\sign}{sgn}
\DeclareMathOperator{\supp}{supp}
\DeclareMathOperator*{\argmin}{arg\,min}
\DeclareMathOperator*{\argmax}{arg\,max}
\newcommand{\R}{\mathbb{R}}
\newcommand{\N}{\mathbb{N}}
\newcommand{\Id}{\mathbb{I}}

\theoremstyle{plain}
\newtheorem{theorem}{Theorem}

\newtheorem{lemma}[theorem]{Lemma}

\newtheorem{remark}[theorem]{Remark}

\newtheorem{postulate}{Assumption} 

\theoremstyle{definition}

\newcommand{\norm}[1]{\left\lVert #1 \right\rVert}

\newcommand*{\mailto}[1]{\href{mailto:#1}{\nolinkurl{#1}}}

\begin{document}

\title[Adaptive multi-penalty regularization]{Adaptive multi-penalty regularization based on a generalized Lasso path}

\date{October 11, 2017}

\author[M.~Grasmair]{Markus Grasmair}
\address{Department of Mathematical Sciences, Norwegian University of Science and Technology, N-7491 Trondheim, Norway}
\email{\mailto{markus.grasmair@ntnu.no}}

\author[T.~Klock]{Timo Klock}
\address{Section for Computing and Software, Simula Research Laboratory AS, 1364 Fornebu, Norway}
\email{\mailto{timo@simula.no}}

\author[V.~Naumova]{Valeriya Naumova}
\address{Section for Computing and Software, Simula Research Laboratory AS, 1364 Fornebu, Norway}
\email{\mailto{valeriya@simula.no}}

\begin{abstract}
For many algorithms, parameter tuning remains a challenging and critical task, which becomes tedious and infeasible in a multi-parameter setting. Multi-penalty regularization, successfully used for solving undetermined sparse regression of problems of unmixing type where signal and noise are additively mixed, is one of such examples. 
In this paper, we propose a novel algorithmic framework for an adaptive parameter choice in multi-penalty regularization with a focus on the correct support recovery. Building upon the theory of regularization paths and algorithms for single-penalty functionals, we extend these ideas to a multi-penalty framework by providing an efficient procedure for the construction of regions containing structurally similar solutions, i.e., solutions with the same sparsity and sign pattern, over the whole range of parameters. Combining this with a model selection criterion, we can choose regularization parameters in a data-adaptive manner.
Another advantage of our algorithm is that it provides an overview on the solution stability over the whole range of parameters. This can be further exploited to obtain additional insights into the problem of interest. We provide a numerical analysis of our method and compare it to the state-of-the-art single-penalty algorithms for compressed sensing problems in order to demonstrate the robustness and power of the proposed algorithm.
\end{abstract}

\keywords{Multi-penalty regularization, Lasso path, compressed sensing, noise folding, adaptive parameter choice, exact support recovery}

\maketitle

\section{Introduction}

\subsection{Multi-penalty regularization for unmixing problems}

Support recovery of a sparse high-dimensional signal still remains a challenging  task from both theoretical and practical perspectives for a variety of applications from harmonic analysis, signal processing, and compressed sensing, see \cite{zhang2009, tropp04} and references therein. Indeed, provided with the signal's support, the signal entries can be easily recovered with optimal statistical rate \cite{wainwright}. Therefore, support recovery has been a topic of active and fruitful research in the last years \cite{shen, osher16, bouchot}. One typically considers linear observation model problems of the form  
 \begin{equation}
\label{lineq}
	A u^\dag  + \delta = y,
\end{equation}
where  $u^\dag \in \R^n$ is the unknown signal  with only a few non-zero entries, $A \in \R^{m \times n}$ is the linear measurement matrix, $\delta  \in \R^m$ is a noise vector affecting the measurements, and $y \in \R^m$ is the result of the measurement, usually $m\ll n$. However, in a more realistic scenario, it is very common that noise is present not only in the measurements but also in the signal itself. In this context we study the impact of the noise folding phenomenon, a situation commonly encountered in compressed sensing \cite{Arias-castro11yeldar, aeron} due to subsampling in the presence of signal noise.  Mathematically we can formalize this scenario by the model
\begin{equation}
\label{unmixing_noise_folding}
	A (u^\dag + v^\dag) + \delta = y,
\end{equation}
where $v^\dag \in \R^n$ is the signal noise and $\delta$ is again a measurement noise. The impact of signal noise on the exact support recovery of the original signal was reported and analysed in \cite{Arias-castro11yeldar, aeron}. Essentially, the authors \cite{aeron}  claim that the exact support recovery is possible when the number of measurements $m$ scales linearly with $n$, leading to a poor compression performance in such cases. 
These negative results can be circumvented by using, for instance, decoding procedures based on multi-penalty regularization as proposed in \cite{arfopeXX, naumpeter, Markus}. There, the authors provided theoretical and numerical pieces of evidence of improved performance of the multi-penalty regularization schemes for the solution of \eqref{unmixing_noise_folding}, especially with respect to support identification, as compared to their single-penalty counterparts.
Inspired by these results, in this paper  we consider the following minimization problem 
\begin{equation}
	\label{eq:multi}
J (u,v) : = \| A(u+v)-y \|^2 +  \alpha \| u\|_{\ell_1} + \beta  \| v\|_{\ell_2}^{2} \rightarrow \min\limits_{u, v},
\end{equation} 
where $\| u\|_{\ell_p} =( \sum_i |u_i |^p)^{1/p}$, $p = 1,2$, denotes the $\ell_p-$norm and $\alpha,\beta > 0$ are regularization parameters. We will denote any solution of \eqref{eq:multi} by $(u_{\beta,\alpha}, v_{\beta,\alpha}).$ The functional \eqref{eq:multi} uses a non-smooth term  $\| \cdot \|_{\ell_1}$ for promoting sparsity of $u$ and a quadratic penalty term for modeling the noise.

One of the main ingredients for the optimal performance of multi-penalty regularization is an appropriate choice of multiple regularization parameters, ideally in a data-adaptive manner. This issue has not been studied so far for this type of problems.
Instead, the earlier works rather rely on brute-force approaches to choose parameters.

To circumvent difficulties related to the regularization parameters choice,  in this paper we propose a two-step procedure for reconstructing the support of the signal of interest $u^\dag$. First, we compute in a cost-efficient way all possible sufficiently sparse supports and sign patterns of the signal attainable from $y$ by means of (\ref{eq:multi}) for different regularization parameters. Then we employ standard regression methods for estimating a vector $u$ that provides the best explanation of the datum $y$ for each found support and sign pattern. As such, we have a complete overview of the solution behaviour over the range of parameters, without imposing any a priori assumption. This allow us to choose regularization parameters  in a data-adaptive manner.

\subsection{Related work}

The formulation of multi-penalty functionals of the type  (\ref{eq:multi})  is not novel. Especially in image and signal analysis, multi-penalty regularization has been presented and analysed in seminal papers by Meyer \cite{Meyer} and Donoho \cite{MR1872845, MR997928}. We refer to \cite{naumpeter, demole} for a thorough overview of the work on multi-penalty regularization in image and signal processing communities. 

Multi-penalty regularization functional \eqref{eq:multi} is equivalent to Huber-norm regularization \cite{huber1964robust}, which is commonly used in image super resolution optimization problems and computer-graphics problems.  The recent work \cite{Zadorozhnyi2016} provides an upper bound on the statistical risk of Huber-regularized linear models by means of the Rademacher complexity. It also provides empirical evidences that the support vector machine with Huber regularizer outperforms its single-penalty counterparts and Elastic-Net  \cite{RSSB:RSSB503} on a wide range of benchmark datasets. In this paper, we pursue a different goal, where  we focus on the support recovery, rather than classification or regression problems.
In this context, systematic theoretical and numerical investigations have only been considered recently \cite{arfopeXX, naumpeter, Markus}.

At the same time, one of the key ingredients for optimal regularization is an adaptive parameter choice. This is generally a challenging task,  especially so in a multi-parameter setting, and it has not yet been studied in the context of sparse support recovery. Grasmair and Naumova present a solution that is conceptually closest to the present paper and also provides bounds on admissible regularization parameters \cite{Markus}. However, these bounds require accurate knowledge about the nature of the signal and the noise, which is rarely available in practice.

\subsection{Contributions}
The paper provides an efficient algorithm for the identification of possible parameter regions leading to structurally similar solutions for the multi-penalty functional (\ref{eq:multi}), i.e.,  solutions with the same sparsity and sign pattern. The main advantage of the proposed algorithm is that, without strong a priori assumptions on the solution, it provides an overview on the solution stability over the whole parameter range. This information can  be exploited for obtaining additional insights into the problem of interest. Furthermore, by combining the algorithm with a simple region selection criterion, regularization parameters can be chosen in a fully adaptive data-driven manner. In Section \ref{se:numerical_experiments}, we provide extensive numerical evidence that our combined algorithm shows a better performance in terms of support recovery when compared to its closest counterparts
such as Lasso and pre-conditioned Lasso (pLasso),
while still having a reasonable computational complexity. 
However, it is worthwhile to stress that our method recovers the solution $u^\dag$ and $v^\dag$ for the entire range of parameters rather than only at specific parameter combinations.

\subsection{Organization of the paper}
The rest of the paper is organized as follows. Section \ref{se:mpr} contains the complete problem set-up, explaining
multi-penalty regularization, and provides in-depth overview of the relevant existing literature.
In Section \ref{sec:tiles}, we introduce tilings as a conceptual framework for studying the  structure of the support of the solution. Section \ref{sec:algorithm} presents an  algorithmic approach to the construction of the complete support tiling.  In Section \ref{se:heuristics} we then discuss the actual realization of our algorithm, providing also  its complexity analysis. Numerical experiments are carried out in Section \ref{se:numerical_experiments}.  Finally, Section~\ref{sec:discussion} offers a snapshot of the main contributions and points out open questions and directions for future work.

\subsection{Notation}
We provide a short overview of the standard notations used in this paper. The true solution $u^\dag$ of the unmixing problem~(\ref{unmixing_noise_folding}) is called $k$-sparse if it has at most $k$ non-zero entries, i.e., $| I | = \# \mathop{{\rm supp}}(u^\dagger) \leq k$,
where $I := \mathop{{\rm supp}}(u^\dag) := \bigl\{i: u^\dag_i \neq 0\}$ denotes the support of $u^\dag$. 
For a matrix $A$, we denote its transpose by $A^T.$ 
The restriction of the matrix $A$ to the columns indexed by $I$ is denoted by $A_I$, i.e., $A_I = (a_{i_1}, \ldots, a_{i_k}), i_k \in I$. The matrix $\Id$ denotes the identity matrix of relevant size.
The sign function $\sign$ is interpreted as  the set valued function $\sign(t) =1$ if $t>0$, $\sign(t) = -1$ if $t<0$ and $\sign(t) = [-1,1]$ if $t = 0$, applied componentwise to the entries of the vector. 

\section{Multi-parameter regularization}
\label{se:mpr}
In this section, after formally introducing the multi-penalty regularization for solving \eqref{unmixing_noise_folding} and relevant known results, we discuss a possible parameter choice for multi-penalty functional \eqref{eq:multi} based on the Lasso path \cite{tibshirani2013lasso}. We show that an extension of the single-penalty Lasso path by partial discretization of the parameter space to the multi-parameter case can have difficulties in capturing the exact support, especially when the solution is very sensitive with respect to the parameters change.        

Inspired by recent theoretical results \cite{Markus}, we  propose to solve the unmixing problem \eqref{unmixing_noise_folding} using multi-penalty Tikhonov functional  \eqref{eq:multi}, where  $\alpha$ and $\beta$ are regularization parameters. 
The starting point for the approach proposed in this paper is the observation that \eqref{eq:multi} reduces to standard $\ell_1$-regularization but where the measurement matrix and the datum are additionally tuned by the second regularization parameter $\beta$. As it has been shown in \cite{Markus}, this modification leads to a superior performance over standard sparsity-promoting regularization.
The main result to that end is the following.

\begin{lemma}\label{le:single}
  The pair $(u_{\beta, \alpha},v_{\beta, \alpha})$ solves~(\ref{eq:multi})
  if and only if
  \[
  v_{\beta, \alpha} = (\beta + A^TA)^{-1}(A^T y - A^TA u_{\beta,\alpha})
  \]
  and $u_{\beta, \alpha}$ solves the optimisation problem
  \begin{equation}\label{eq:minu}
  \frac{1}{2}\|B_\beta u - y_\beta\|^2 + \alpha\|u\|_1 \to \min
  \end{equation}
  with
  \[
  B_\beta = \Bigl(I + \frac{AA^T}{\beta}\Bigr)^{-1/2}A
  \]
  and
  \[
  y_\beta = \Bigl(I+\frac{AA^T}{\beta}\Bigr)^{-1/2}y.
  \]
\end{lemma}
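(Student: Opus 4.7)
The strategy is to exploit the fact that $J(u,v)$ is strictly convex and coercive in $v$ when $u$ is held fixed (the term $\beta\|v\|^2$ with $\beta > 0$ guarantees this), so $v$ can be eliminated in closed form and (\ref{eq:multi}) reduces to a single-penalty problem in $u$ alone. Since $J$ is jointly convex in $(u,v)$, the usual iterated-minimisation identity applies: $(u_*, v_*)$ minimises $J$ if and only if $v_*$ minimises $J(u_*, \cdot)$ and $u_*$ minimises the reduced functional $u \mapsto \min_v J(u,v)$. This is the skeleton of the ``if and only if.''

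For the inner minimisation, I compute $\nabla_v \bigl[\|A(u+v) - y\|^2 + \beta\|v\|^2\bigr] = 2 A^T(A(u+v) - y) + 2 \beta v$ and equate it to zero. Rearranging gives the normal equation $(\beta I + A^T A)\, v = A^T y - A^T A u$, and positive-definiteness of $\beta I + A^T A$ yields the unique minimiser $v_*(u) = (\beta I + A^T A)^{-1}(A^T y - A^T A u)$, which is precisely the stated formula for $v_{\beta,\alpha}$.

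The core algebraic step is to substitute $v_*(u)$ back into $\|A(u+v) - y\|^2 + \beta\|v\|^2$ and simplify. Setting $r := Au - y$ so that $v_*(u) = -(\beta I + A^T A)^{-1} A^T r$, and invoking the push-through identity $A(\beta I + A^T A)^{-1} = (\beta I + A A^T)^{-1} A$ together with $(\beta I + A A^T)^{-1}(\beta I + A A^T) = I$, a direct calculation collapses the quadratic-in-$v_*$ terms into
\[
\|A(u + v_*(u)) - y\|^2 + \beta\|v_*(u)\|^2 = \beta\, r^T (\beta I + A A^T)^{-1} r = r^T P_\beta^{-1} r,
\]
where $P_\beta := I + A A^T / \beta$. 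Since $r^T P_\beta^{-1} r = \|P_\beta^{-1/2}(Au - y)\|^2 = \|B_\beta u - y_\beta\|^2$, the reduced objective in $u$ is, up to the conventional $\tfrac{1}{2}$ factor on the fidelity term, the Lasso functional (\ref{eq:minu}); thus $u_{\beta,\alpha}$ is characterised as its minimiser.

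The main obstacle is not conceptual but bookkeeping: one must carry out the cancellation between the cross term $2\langle Av_*(u), Au - y\rangle$ and the combined quadratic self-terms $\|Av_*(u)\|^2 + \beta\|v_*(u)\|^2$ consistently, and the push-through identity is the only non-obvious ingredient that makes the telescoping work. Once the reduced functional is identified, both directions of the equivalence fall out of the standard fact that joint minimisation of a convex function factors through partial minimisation in $v$, combined with the uniqueness of $v_*(u)$ from the second step.
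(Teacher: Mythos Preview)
The paper does not actually supply a proof of this lemma; it states the result immediately after citing \cite{Markus} as prior work and then moves on to Assumption~\ref{po:unique}. Your argument is correct and is the standard way to establish this kind of reduction: minimise out $v$ explicitly via the normal equations, then use the push-through identity $A(\beta I + A^TA)^{-1} = (\beta I + AA^T)^{-1}A$ to collapse the residual quadratic form to $r^T(I + AA^T/\beta)^{-1}r$. The joint convexity of $J$ justifies the ``if and only if'' via partial minimisation, and the strict convexity in $v$ gives uniqueness of $v_*(u)$.

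One minor point worth flagging, which you already noticed: the paper's original functional~(\ref{eq:multi}) carries no $\tfrac12$ on the data-fidelity term, whereas the reduced problem~(\ref{eq:minu}) does. Your computation gives $\min_v J(u,v) = \|B_\beta u - y_\beta\|^2 + \alpha\|u\|_1$, so the minimiser in $u$ coincides with that of~(\ref{eq:minu}) only after rescaling $\alpha$ by a factor of $2$. This is an inconsistency in the paper's statement rather than a defect in your proof; since the lemma is used downstream only to transfer structural properties (KKT conditions, piecewise linearity of the path) and not quantitative values of $\alpha$, it is harmless in context, but you are right to call it out.
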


In the following, we will assume that~\eqref{eq:multi}
always has a unique solution within the considered parameter range.

\begin{postulate}\label{po:unique}
  For all parameters $\beta$ and $\alpha$ that are considered in the following,
  the optimization problem~\eqref{eq:multi} has a unique solution
  $(u_{\beta, \alpha},v_{\beta, \alpha})$.
\end{postulate}

As a consequence of this assumption, we have the following:

\begin{lemma}\label{le:cont}
  If Assumption~\ref{po:unique} holds 
  then $u_{\beta, \alpha}$ and $v_{\beta, \alpha}$ depend continuously
  on the parameters $\alpha$, $\beta > 0$.
\end{lemma}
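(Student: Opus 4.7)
The plan is to use the standard stability argument for minimizers of continuous convex functionals, exploiting the uniqueness postulated in Assumption~\ref{po:unique}. Let $J_{\alpha,\beta}(u,v) = \|A(u+v)-y\|^2 + \alpha\|u\|_{\ell_1} + \beta\|v\|_{\ell_2}^2$, and fix parameters $(\alpha,\beta)$ with $\alpha,\beta>0$. I would take an arbitrary sequence $(\alpha_n,\beta_n)\to(\alpha,\beta)$ with $\alpha_n,\beta_n>0$, denote $(u_n,v_n):=(u_{\beta_n,\alpha_n},v_{\beta_n,\alpha_n})$, and show $(u_n,v_n)\to(u_{\beta,\alpha},v_{\beta,\alpha})$.

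First I would establish boundedness of $(u_n,v_n)$. For $n$ large enough, $\alpha_n\ge\alpha/2$ and $\beta_n\ge\beta/2$. Testing the minimality against $(0,0)$ gives
\[
\alpha_n\|u_n\|_{\ell_1} + \beta_n\|v_n\|_{\ell_2}^2 \le J_{\alpha_n,\beta_n}(u_n,v_n) \le J_{\alpha_n,\beta_n}(0,0) = \|y\|^2,
\]
so $\|u_n\|_{\ell_1}\le 2\|y\|^2/\alpha$ and $\|v_n\|_{\ell_2}^2\le 2\|y\|^2/\beta$ for all sufficiently large $n$. Thus $(u_n,v_n)$ lies in a compact subset of $\R^n\times\R^n$.

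Next I would extract a convergent subsequence $(u_{n_k},v_{n_k})\to(\bar u,\bar v)$ and identify the limit. For any fixed $(u,v)$, minimality yields $J_{\alpha_{n_k},\beta_{n_k}}(u_{n_k},v_{n_k})\le J_{\alpha_{n_k},\beta_{n_k}}(u,v)$. Since $J_{\alpha,\beta}(u,v)$ is jointly continuous in $(\alpha,\beta,u,v)$ (each of the three summands is), passing to the limit gives $J_{\alpha,\beta}(\bar u,\bar v)\le J_{\alpha,\beta}(u,v)$ for every $(u,v)$. Hence $(\bar u,\bar v)$ is a minimizer of $J_{\alpha,\beta}$, and by Assumption~\ref{po:unique} we get $(\bar u,\bar v)=(u_{\beta,\alpha},v_{\beta,\alpha})$.

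Since every convergent subsequence of the bounded sequence $(u_n,v_n)$ has the same limit $(u_{\beta,\alpha},v_{\beta,\alpha})$, a standard subsubsequence argument shows $(u_n,v_n)\to(u_{\beta,\alpha},v_{\beta,\alpha})$, proving continuity. No step here is a real obstacle; the only thing to be careful about is that the bounds on $\|u_n\|_{\ell_1}$ and $\|v_n\|_{\ell_2}^2$ require $\alpha$ and $\beta$ to be bounded away from zero, which is why the statement is restricted to $\alpha,\beta>0$ rather than $\alpha,\beta\ge 0$.
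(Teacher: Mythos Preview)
Your proof is correct, but it takes a different route from the paper's. The paper first invokes Lemma~\ref{le:single} to reduce to the single-parameter problem~\eqref{eq:minu}, writes down the KKT conditions for $u_{\beta,\alpha}$, observes that $B_\beta$ and $y_\beta$ depend continuously on $\beta$, and then argues (tersely) that uniqueness plus continuous dependence of the KKT system on $(\alpha,\beta)$ forces continuity of $u_{\beta,\alpha}$; continuity of $v_{\beta,\alpha}$ then follows from the explicit formula in Lemma~\ref{le:single}. You instead run a direct variational stability argument on the original two-variable functional: uniform boundedness of minimizers via comparison with $(0,0)$, compactness, and identification of subsequential limits as minimizers of $J_{\alpha,\beta}$ by joint continuity, with Assumption~\ref{po:unique} pinning down the limit.

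Your approach is more self-contained and arguably more rigorous as written; it needs neither the reduction of Lemma~\ref{le:single} nor the KKT characterization, and it treats $u$ and $v$ on equal footing. The paper's route is shorter but also serves an expository purpose: it introduces the KKT conditions~\eqref{eq:KKT} that are reused repeatedly later (e.g., in Lemma~\ref{lem:continuous} and the Lasso path analysis), so the continuity proof doubles as a setup for the rest of the argument.
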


\begin{proof}
  Since $u_{\beta, \alpha}$ is a solution of the single parameter
  problem 
  \[
  \frac{1}{2}\norm{B_\beta u - y_\beta}^2 + \alpha\norm{u}_{1} \to \min,
  \]
  it is uniquely characterized by the Karush-Kuhn-Tucker (KKT) conditions
  \begin{equation}\label{eq:KKT}
    \begin{aligned}
      B_{\beta,i}^T(y_\beta - B_\beta u) & = \alpha \sign(u_i) &&\text{if
      } u_i \neq 0,\\
|B_{\beta,i}^T(y_\beta - B_\beta u)| &\le \alpha &&\text{if }
      u_i = 0.
    \end{aligned}
  \end{equation}
  Now note that $B_\beta$ and $y_\beta$ depend continuously on $\beta
  > 0$, which implies that also the KKT conditions change continuously
  with both $\alpha$ and $\beta$. Because of the uniqueness of
  $u_{\beta, \alpha}$, its continuous dependence on the parameters
  follows.
\end{proof}

The behaviour of the  $\ell_1$-regularization is by now fairly well understood \cite{CPA:CPA20350} and there are well-established  approaches  especially in statistical learning literature for calculating $\ell_1-$regularized solution as a function of $\alpha$ \cite{tibshirani2013lasso}, i.e., for finding the solution for all $\alpha \in [0, \infty]$.  The existing algorithms are based on the observation that the solution path of the $\ell_1-$regularized or, as it is called in statistics, Lasso problem, is piecewise-linear in each component of the solution \cite{rozhu07}. 
Our goal in this paper is to provide an efficient procedure for tracking the behaviour of $u_{\beta, \alpha}$ in terms of the support and sign pattern as a function of both regularization parameters. We will assume that the regularization parameter $\beta$ is defined in a finite interval $[\beta_{\min}, \beta_{\max}],$ which ideally should be chosen depending on a problem at hand. For the sake of self-containedness, we present the Lasso path algorithm from  a conceptual point of view, and we refer to \cite{tibshirani2013lasso} for rigorous arguments for its correctness. 

\paragraph{The Lasso path} The algorithm is essentially given by an iterative or inductive verification of the optimality conditions \eqref{eq:KKT}, starting with $\alpha_0 = \infty $ for fixed $\beta$, so that solution \eqref{eq:minu} is trivial. As the parameter $\alpha$ decreases, the algorithm computes $u_{\beta,\alpha}$ that is piecewise linear and continuous as a function of $\alpha.$ 
Each knot $\alpha^{(k)}$ in this path corresponds to an iteration of the algorithm, in which the update direction of the solution $u_{\beta,\alpha}$ is altered in order to satisfy the optimality conditions.
 
The support of $u_{\beta, \alpha}$ changes only at the knot points: as $\alpha$ decreases, each knot $ \alpha^{(k)}$ corresponds  to entries added to or deleted from the support of the solution. Such a model is attractive and efficient because it allows to generate the whole regularization path $u_{\beta, \alpha}, ~0\leq \alpha\leq \infty,$ simply by sequentially finding the knots $\alpha^{(k)}$ and calculating the solution at those points. For fixed $\beta$, these knots can be computed explicitly, see Lemma \ref{le:lassopath} below.

\paragraph{The discretized Lasso path for multi-penalty regularization} 
A proper choice of $\beta$ is essential for a good performance of the algorithm \eqref{eq:multi}.  The most straightforward way of extending the Lasso path algorithm to the multiple parameter setting is to discretize the range of the $\beta$ parameters and then solve the single-penalty problem (\ref{eq:minu}) that we obtain for each parameter $\beta$. 

Specifically, we choose an upper bound $\beta_{\max}$ and a lower bound $\beta_{\min}$ of possible values of $\beta$, and discretization points $\beta_{\min} = \beta_0 < \beta_1 < \ldots < \beta_N = \beta_{\max}$. 
For each $\beta_i$ we can use the Lasso path algorithm in order to compute the solutions of the multi-penalty regularization problem with $\beta_i$ and all parameters $\alpha$ up to a predefined sparsity level.  That is, we start for each $\beta_i$ with some sufficiently large $\alpha^{(0)}(\beta_i)$ such that the corresponding index set is $I^{(0)}(\beta_i) := \emptyset$, and the sign pattern 
$\sigma^{(0)}(\beta_i) := \emptyset$. Then we inductively compute the knot points $\alpha^{(k)}(\beta_i)$ where, as discussed above, indices either enter or leave the
support of the solution $u_{\beta_i,\alpha}$. This allows us to update the current support and sign pattern according to the following result:

\begin{lemma}[Lasso path algorithm for~\eqref{eq:minu}, see \cite{tibshirani2013lasso}]\label{le:lassopath}
Assume that $\alpha^{(k)}(\beta_i)$ has already been computed and
the solution $u_{\beta_i, \alpha}$ has the support $I$ and sign pattern $\sigma\in\{\pm 1\}^I$
for values of $\alpha$ slightly smaller than $\alpha^{(k)}(\beta_i)$. We define
for every $1 \le j \le n$ 
\begin{equation}\label{eq:talpha}
\tilde{\alpha}_{j}(I,\sigma,\beta_i) :=
\left\{
 \begin{array}{l@{\qquad}l}
  \frac{B_{\beta_i,j}^T(\Id - B_{\beta_i,I}(B_{\beta_i,I}^TB_{\beta_i,I})^{-1}B_{\beta_i,I}^T)y_{\beta_i}}{\gamma - B_{\beta_i,j}^TB_{\beta_i,I}(B_{\beta_i,I}^T B_{\beta_i,I})^{-1}\sigma}
  & \textrm{ if } j \not \in I\\
  \frac{\bigl((B_{\beta_i,I}^TB_{\beta_i,I})^{-1}B_{\beta_i,I}^Ty_{\beta_i}\bigr)_j}{\bigl((B_{\beta_i,I}^TB_{\beta_i,I})^{-1}\sigma\bigr)_j} & \textrm{ if } j \in I.
\end{array}\right.
\end{equation}
Here $\gamma = \sign(B_{\beta_i,j}^T(\Id - B_{\beta_i,I}(B_{\beta_i,I}^TB_{\beta_i,I})^{-1}B_{\beta_i,I}^T)y_{\beta_i}),$ unless $j$ was contained in the support of the solution at the previous step. In this case, we take $\gamma$ with an opposite sign to the enumerator of the first case in \eqref{eq:talpha}.

Then we have with
\[
\mathcal{K}_k(\beta_i)
:= \bigl\{j : \tilde{\alpha}_{j}(I^{(k)}(\beta_i),\sigma^{(k)}(\beta_i),\beta_i) < \alpha^{(k)}(\beta_i)\bigr\}
\]
that
\begin{equation}\label{eq:alpha_tau0}
\alpha^{(k+1)}(\beta_i)
= \max_{j\in\mathcal{K}_k(\beta_i)} \tilde{\alpha}_{j}(I^{(k)}(\beta_i),\sigma^{(k)}(\beta_i),\beta_i).
\end{equation}

The updated index set $I^{(k+1)}(\beta_i)$ is formed from $I^{(k)}(\beta_i)$ by adding all indices $j \not\in I^{(k)}(\beta_i)$ for which the maximum in~(\ref{eq:alpha_tau0}) is attained, or removing all indices $j\in I^{(k)}(\beta_i)$ for which the maximum in~(\ref{eq:alpha_tau0}) is attained. In addition, the signs $\sigma_j$ corresponding to the newly added indices are precisely the values of the corresponding $\gamma$ in~(\ref{eq:talpha}).
\end{lemma}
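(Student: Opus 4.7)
The plan is to derive the formula \eqref{eq:talpha} directly from the KKT conditions \eqref{eq:KKT}, by determining the largest value of $\alpha$ below $\alpha^{(k)}(\beta_i)$ at which the current support/sign pattern $(I,\sigma)$ ceases to be consistent with the KKT system. Throughout, I fix $\beta = \beta_i$ and write $B = B_{\beta_i}$, $y=y_{\beta_i}$ for brevity.

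First, for $\alpha$ slightly below $\alpha^{(k)}(\beta_i)$ the KKT equalities on the active set determine $u_{\beta_i,\alpha}$ explicitly: setting the components outside $I$ to zero and using $B_I^T(y - B_I u_I) = \alpha\sigma$, one obtains the affine expression
\[
u_I(\alpha) = (B_I^T B_I)^{-1}\bigl(B_I^T y - \alpha\,\sigma\bigr),
\]
provided $B_I^TB_I$ is invertible (which holds by Assumption~\ref{po:unique}, since a singular Gram matrix would destroy uniqueness). This formula shows that the solution is affine in $\alpha$ on any interval where $(I,\sigma)$ is active, which is the basic Lasso-path fact.

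Next, I would determine the two events that force the structure to change as $\alpha$ decreases below $\alpha^{(k)}(\beta_i)$: either (a) an index $j\in I$ hits zero and must leave the active set, or (b) the inequality constraint $|B_j^T(y - B_I u_I(\alpha))|\le \alpha$ for some $j\notin I$ becomes an equality and $j$ must enter. For case (a), solving $u_j(\alpha)=0$ in the component expression above gives exactly the second branch of \eqref{eq:talpha}. For case (b), I would plug the formula for $u_I(\alpha)$ into the residual,
\[
y - B_I u_I(\alpha) = \bigl(\Id - B_I(B_I^TB_I)^{-1}B_I^T\bigr)y + \alpha\, B_I(B_I^TB_I)^{-1}\sigma,
\]
form $B_j^T(y - B_I u_I(\alpha))$, and set it equal to $\gamma\alpha$ with $\gamma\in\{\pm 1\}$ the sign at which the inequality first becomes an equality. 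Solving for $\alpha$ yields the first branch of \eqref{eq:talpha}. The correct sign $\gamma$ to use is the sign that the projection term $B_j^T(\Id-B_I(B_I^TB_I)^{-1}B_I^T)y$ has (so that $\tilde\alpha_j>0$), and this sign will then be assigned to $\sigma_j$ for the newly activated index. The one subtle case is when $j$ was just removed from $I$ at the previous knot: then the residual projection vanishes along the direction just removed, and the sign must be chosen opposite to the numerator, matching the parenthetical remark in the statement.

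Having candidate values $\tilde\alpha_j(I,\sigma,\beta_i)$ for every $j$, the next knot $\alpha^{(k+1)}(\beta_i)$ is the largest one strictly below $\alpha^{(k)}(\beta_i)$, i.e.\ the maximum over $\mathcal{K}_k(\beta_i)$, which proves \eqref{eq:alpha_tau0}. The update rule for $I^{(k+1)}(\beta_i)$ and $\sigma^{(k+1)}(\beta_i)$ is then read off from which indices attain that maximum, with newly added signs taken from the corresponding $\gamma$. Continuity of $u_{\beta_i,\alpha}$ in $\alpha$ (Lemma~\ref{le:cont}) guarantees that the affine piece on $(\alpha^{(k+1)}(\beta_i),\alpha^{(k)}(\beta_i))$ glues continuously with the neighbouring pieces, so no other structural change can occur in that interval.

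The main obstacle is the bookkeeping for the sign $\gamma$: one must verify both that the chosen sign gives a positive $\tilde\alpha_j$ (so that it is a legitimate candidate for the next knot below $\alpha^{(k)}$) and that the treatment of a freshly-removed index does not produce a spurious immediate re-entry. Once this is handled carefully, the rest of the proof is algebraic manipulation of the KKT system using $u_I(\alpha)$ above.
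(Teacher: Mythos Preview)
Your derivation is correct and is precisely the standard Lasso-path argument that the paper is invoking. In fact, the paper does not give a self-contained proof of this lemma at all: its proof consists only of the remark that this is the usual Lasso path algorithm applied to the single-parameter problem~\eqref{eq:minu}, with a reference to~\cite{tibshirani2013lasso}. Your write-up supplies exactly the KKT-based computation underlying that reference, so there is nothing to add or correct.
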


\begin{proof}
  This is nothing else than the usual Lasso path algorithm applied to
  the equivalent single parameter problem~\eqref{eq:minu}.
  See~\cite{tibshirani2013lasso} for more details.
\end{proof}

Provided that the discretization of the parameters $\beta$ is sufficiently fine, this method is capable of computing a reasonable approximation of the
complete tiling over the parameter space and, in particular, of identifying all possible supports and sign patterns of the regularized solution $u_{\beta,\alpha}$.

However, for certain configurations it can happen that the support of the correct solution $u^\dagger$ is achievable only for a small range of $\beta$-parameters.
An example of such a situation is presented in Figure~\ref{fi:smalltile}. Thus, a very fine discretization of the parameter range will  be needed in order to capture all possible supports. Moreover, in practice, very often even the support size is unknown and must be chosen according to some heuristic principle. In this situation, having an overview of the solution behaviour over the whole range of parameters rather than solution at discrete points could provide some additional hints for such a choice.

\begin{figure}[h]
  \centering
     \includegraphics[width=0.8\linewidth]{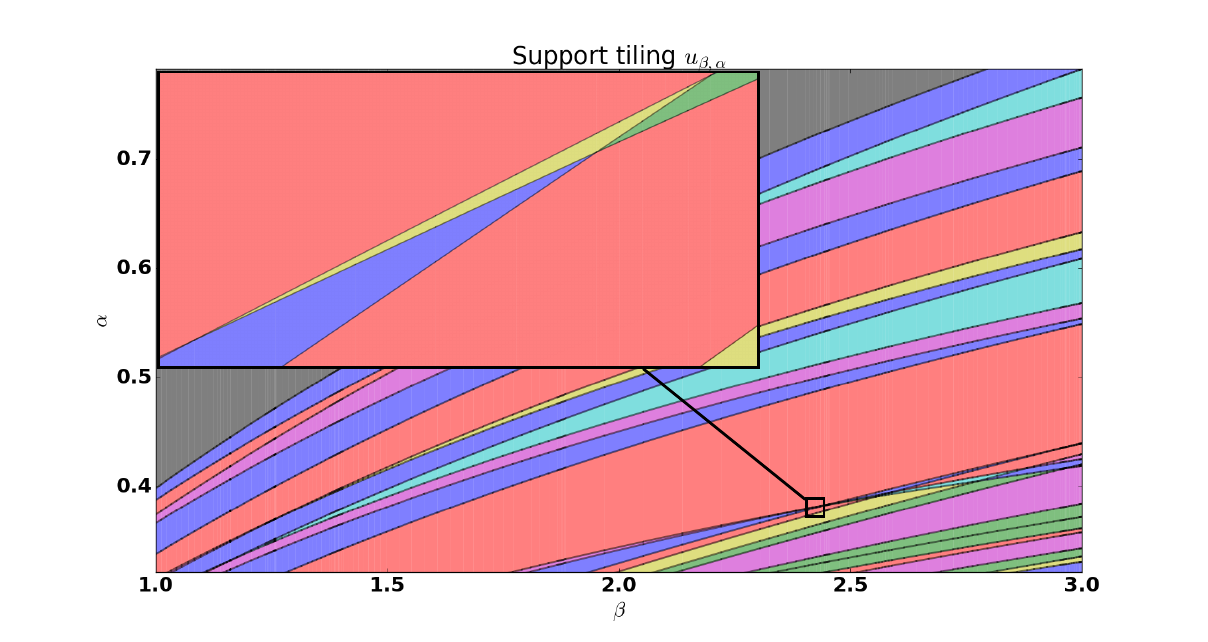}
  \caption{\label{fi:smalltile} Part of the parameter space detailing
    the different solutions. Each of the different tiles corresponds
    to a different support or sign pattern of the solution
    $u_{\beta,\alpha}$. Note in particular the small details indicating
    that certain supports can only be obtained for very specific
    parameter values. In this particular case, the yellow tile
    in the center of the zoomed region describes the parameters leading to the exact support recovery.}
\end{figure}

\begin{remark} \label{re:plasso}
It is worthwhile to mention that multi-penalty regularization can be interpreted as an interpolation between a standard and pre-conditioned \cite{jiaro15} Lasso. The latter computes approximations $u_\gamma$ of the solution of the equation
$A u = y$ by solving the optimization problem
\[
\frac12 \| F Au - Fy \|^2 + \gamma \| u \|_1 \rightarrow \min_u,
\] 
where $F = U \Sigma^{\dag} U^T$ and $A$ has the singular value decomposition  $A = U \Sigma V^T,$ where $\Sigma^{\dag}$ denotes 
the pseudo-inverse of $\Sigma$, that is, the diagonal matrix with non-zero entries equal to the inverse of the
non-zero entries of $\Sigma$.

Indeed, for $\beta \rightarrow 0$ the solution of the multi-penalty regularization $u_{\beta, \alpha}$  with $\alpha = \beta \gamma$ converges to the solution of the pre-conditioned Lasso $u_\gamma.$ This can be seen by noting that \eqref{eq:minu} is equivalent to the following minimization problem
  \begin{equation}\label{eq:minu_eq}
  \frac{1}{2}\|\hat B_\beta u - \hat y_\beta\|^2 + \frac{\alpha}{\beta}\|u\|_1 \to \min,
  \end{equation}
with 
$$
\hat B_\beta = (\beta I + A A^*)^{-1/2} A 
$$
and
$$
\hat y_\beta = (\beta I + A A^*)^{-1/2} y.
$$
As $\beta \rightarrow 0,$ we have $\hat B_\beta \rightarrow FA$ and $\hat y_\beta  \rightarrow Fy.$ By $\Gamma-$convergence, it follows that the minimizers also converge, i.e., $u_{\beta, \beta\gamma} \rightarrow u_\gamma.$ 
Conversely, for  $\beta = \infty,$ multi-penalty regularization \eqref{eq:multi} is equivalent to a standard Lasso. 

Therefore, with a properly chosen parameter $\beta$, one can expect that multi-penalty regularization combines the advantages of both regularization methods and mitigates their drawbacks. In particular, it is known \cite{jiaro15} that the pLasso is less robust to measurement noise and to ill-conditioned measurement operators compared to the standard Lasso; whereas the standard Lasso  has problems identifying the correct support for problems of unmixing type as exemplified in Section \ref{se:numerical_experiments}.
\end{remark}

\section{Support tiling}
\label{sec:tiles}

Instead of reconstructing the solution at fixed parameters $\beta_i$, we approach the extension of the Lasso path algorithm for multi-penalty regularization by constructing regions or, the so-called, tilings, containing structurally similar solutions of \eqref{eq:multi}, i.e., solutions with the same sparsity and sign pattern, while still preserving simplicity in calculations.
 In this section, we introduce this concept, providing some geometrical interpretations for ease of understanding together with necessary notation.

We denote by
\[
I_{\beta,\alpha} := \supp(u_{\beta,\alpha})
\]
the support of the regularized solution $u_{\beta,\alpha}$,
and by
\[
\sigma_{\beta,\alpha} := (\sign(u_{\beta,\alpha,i}))_{i\in I_{\beta,\alpha}} \subset \{\pm 1\}^{I_{\beta,\alpha}}
\]
the sign pattern of $u_{\beta,\alpha}$.
Given $\alpha$, $\beta > 0$, we then define
\[
\tilde{\tau}_{\beta,\alpha}
:= \{(\tilde{\beta},\tilde{\alpha}) \in \R^2_{>0}: I_{\tilde{\beta},\tilde{\alpha}} = I_{\beta,\alpha}
\text{ and } \sigma_{\tilde{\beta},\tilde{\alpha}} = \sigma_{\beta,\alpha} \},
\]
that is, $\tilde{\tau}_{\beta,\alpha}$ contains all parameters
that lead to the same support and sparsity pattern
for the reconstruction of $u$.
Additionally, we denote by $\tau_{\beta,\alpha} \subset \R^2_{>0}$
the connected component
of the set $\tilde{\tau}_{\beta,\alpha}$ that contains $(\beta,\alpha)$.
Then the family
\[
\mathcal{T} := \{\tau_{\beta,\alpha}: (\beta,\alpha) \in \R^2_{>0} \}
\]
forms a tiling of $\R^2_{>0}$.

Given a tile $\tau \in \mathcal{T}$, we now denote
\[
I_{\tau} := I_{\beta,\alpha}
\text{ and }
\sigma_{\tau} := \sigma_{\beta,\alpha}
\qquad\text{ for any } (\beta,\alpha) \in \tau,
\]
the common support and sign pattern of the reconstructions
of $u$ on the tile $\tau$.
Moreover, we define
\[
\begin{aligned}
\beta^-(\tau) &:= \inf\{\beta > 0: \text{ there exists } \alpha > 0 \text{ with } (\beta,\alpha) \in \tau\},\\
\beta^+(\tau) &:= \sup\{\beta > 0: \text{ there exists } \alpha > 0 \text{ with } (\beta,\alpha) \in \tau\},
\end{aligned}
\]
the left and right hand side borders of the tile $\tau$.
We note that, because of the connectedness of the tiles,
there exists for every $\beta^-(\tau) < \beta < \beta^+(\tau)$
some $\alpha > 0$ such that $(\beta, \alpha) \in \tau$.
In the following Lemma, we will show that the
set of these parameters $\alpha$ is actually an interval
(see also \cite[Proof of Lemma 6]{tibshirani2013lasso}).

\begin{lemma}
  Assume that $\tau \in \mathcal{T}$ and
  $\beta^-(\tau) < \beta < \beta^+(\tau)$.
  Then the set of parameters $\alpha>0$ with
  $(\beta,\alpha) \in \tau$ is a non-empty interval.
\end{lemma}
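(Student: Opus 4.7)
The plan is to handle non-emptiness and the interval property separately.
For non-emptiness, since $\tau$ is connected by construction, its image under the projection $(\beta,\alpha) \mapsto \beta$ is a connected subset of $\R$ and hence an interval whose infimum and supremum are, by definition, $\beta^-(\tau)$ and $\beta^+(\tau)$. Consequently this projected interval contains the open interval $(\beta^-(\tau),\beta^+(\tau))$, so for any $\beta$ in that range there exists some $\alpha>0$ with $(\beta,\alpha)\in\tau$, showing that the slice $S_\beta := \{\alpha>0 : (\beta,\alpha)\in\tau\}$ is non-empty.

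For the interval property, the main step is to show that the bigger slice $\{\alpha>0 : (\beta,\alpha)\in\tilde{\tau}_{\beta,\alpha_1}\}$ is convex, and then to transfer this to $\tau$ by connectedness. To this end, pick any $\alpha_1 < \alpha_2$ in $S_\beta$ and let $I := I_\tau$, $\sigma := \sigma_\tau$. Using Lemma~\ref{le:single} and the KKT system~\eqref{eq:KKT}, both $u_{\beta,\alpha_1}$ and $u_{\beta,\alpha_2}$ are given on $I$ by the explicit formula $(B_{\beta,I}^T B_{\beta,I})^{-1}(B_{\beta,I}^T y_\beta - \alpha_k\sigma)$ (with zeros off $I$); note that $B_{\beta,I}$ has full column rank because Assumption~\ref{po:unique} forces uniqueness of the Lasso solution. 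Define $\hat u(\alpha)$ for $\alpha \in [\alpha_1,\alpha_2]$ by the same formula with $\alpha$ in place of $\alpha_k$; this is an affine, hence continuous, function of $\alpha$.

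The key observation is that each condition of~\eqref{eq:KKT} is linear in $\alpha$ along this candidate. For $i\in I$, the coordinate $\hat u(\alpha)_i$ is affine in $\alpha$ and has sign $\sigma_i$ at both endpoints; an affine function with the same non-zero sign at two points has that sign throughout the segment, so $\sign(\hat u(\alpha)_i)=\sigma_i$ for all $\alpha\in[\alpha_1,\alpha_2]$. For $i\notin I$, the quantity $g_i(\alpha):=B_{\beta,i}^T(y_\beta - B_\beta \hat u(\alpha))$ is affine in $\alpha$, and the inactive KKT condition amounts to the two affine inequalities $g_i(\alpha)-\alpha \le 0$ and $g_i(\alpha)+\alpha\ge 0$; these hold at $\alpha_1$ and $\alpha_2$, and, being affine in $\alpha$, they persist on the entire segment. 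Thus $\hat u(\alpha)$ satisfies the KKT conditions with support $I$ and sign pattern $\sigma$ for every $\alpha\in[\alpha_1,\alpha_2]$, and uniqueness forces $u_{\beta,\alpha}=\hat u(\alpha)$. This places the whole vertical segment $\{\beta\}\times[\alpha_1,\alpha_2]$ inside $\tilde{\tau}_{\beta,\alpha_1}$.

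Finally, transferring from $\tilde{\tau}$ to $\tau$: the segment $\{\beta\}\times[\alpha_1,\alpha_2]$ is connected and contains the point $(\beta,\alpha_1)\in\tau$, so by definition of $\tau$ as the connected component of $\tilde{\tau}_{\beta,\alpha_1}$ containing $(\beta,\alpha_1)$, the entire segment lies in $\tau$. Hence $[\alpha_1,\alpha_2]\subset S_\beta$, so $S_\beta$ is convex and therefore an interval. The part I expect to require the most care is checking that the candidate solution $\hat u(\alpha)$ genuinely has \emph{exactly} the support $I$ (not a proper subset) throughout the segment, which is why it matters that an affine scalar function with the same strict sign at the endpoints cannot vanish in the interior.
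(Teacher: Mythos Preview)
Your proof is correct and follows essentially the same approach as the paper: both arguments exploit that the KKT conditions for the reduced Lasso problem are affine in $\alpha$, so the convex combination (equivalently, your affine interpolant $\hat u(\alpha)$) of two solutions with the same support and sign pattern again satisfies the KKT system with the interpolated parameter. Your version is in fact more careful than the paper's in several places---you explicitly justify non-emptiness via the projection of a connected set, explain why the support cannot shrink in the interior (affine coordinates with the same strict sign at both endpoints cannot vanish in between), and make the passage from $\tilde{\tau}$ to the connected component $\tau$ explicit---all of which the paper's proof glosses over.
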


\begin{proof}
  Assume that $\alpha_0 < \alpha_1$ are such that $(\beta,\alpha_i)
  \in \tau$ for $i = 0$, $1$. 
  Then $u_{\beta,\alpha_i}$ satisfy the conditions
  \[
  u_{\beta,\alpha_i,I_{\tau}} =
  (B_{\beta,I_\tau}^TB_{\beta,I_\tau})^{-1}(B_{\beta,I_\tau}^Ty_\beta
  - \alpha_i \sigma_\tau) 
  \]
  for $i = 0$, $1$, and
  \[
  | B_{\beta,j}^T(B_\beta u_{\beta,\alpha_i} - y_\beta)| \le \alpha_i.
  \]
  As a consequence, for every $0 < \lambda < 1$, we have that
  $u^{\lambda}:=\lambda u_{\beta,\alpha_1} + (1-\lambda)u_{\beta,\alpha_0}$
  satisfies the same conditions with $\alpha_i$ replaced by
  $\alpha_\lambda = \lambda \alpha_0 + (1-\lambda)\alpha_1$. Since
  $\supp u^{\lambda} = \supp u_{\beta,\alpha_i} = I_\tau$ and also the
  sign patterns on the support are the same, this implies that
  $u^{\lambda} = u_{\beta,\alpha_\lambda}$ and thus
  $(\beta,\alpha_\lambda) \in \tau$.
\end{proof}

Next we define for every tile $\tau \in \mathcal{T}$
and every $\beta^-(\tau) < \beta < \beta^+(\tau)$ the functions
\[
\begin{aligned}
\alpha^+_\tau(\beta) &= \sup\{\alpha > 0: (\beta,\alpha) \in \tau\},\\
\alpha^-_\tau(\beta) &= \inf\{\alpha > 0: (\beta,\alpha) \in \tau\},\\
\end{aligned}
\]
Provided these functions are continuous, a criterion
for which will be given in the next lemma,
their graphs form precisely the boundary of the tile $\tau$.
We note that the conditions in the next lemma are satisfied for almost all matrices $A$.

\begin{lemma}
\label{lem:continuous}
  Let $1 \le s \le m$.
  Assume that there exist no parameter $\beta > 0$,
  no subset $\emptyset \neq I \subset \{1,\ldots,n\}$ with $\lvert I \rvert \le s+1$,
  and no $j \in I$ such that the equations
  \[
  \bigl((B_{\beta,I}^T B_{\beta,I})^{-1}B_{\beta,I}^T y_\beta\bigr)_j = 0
  \]
  and
  \[
  \bigl((B_{\beta,I}^TB_{\beta,I})^{-1}\sigma\bigr)_j = 0
  \]
  are simultaneously satisfied.
  Then for all tiles $\tau$ with $\lvert I_\tau \rvert \le s$,
  the functions $\alpha^+_\tau$ and $\alpha^-_\tau$ are continuous.
\end{lemma}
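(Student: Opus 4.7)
The plan is to rewrite the tile $\tau$ as the intersection of finitely many strict affine inequalities in $\alpha$ (with continuous, $\beta$-dependent coefficients), express the boundary functions $\alpha^\pm_\tau$ as a min/max of the resulting rational boundary values, and then invoke the non-degeneracy hypothesis to exclude the few points where such a rational function would fail to be continuous.

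First, on the tile the KKT system gives the explicit formula $u_{\beta,\alpha,I_\tau} = (B_{\beta,I_\tau}^T B_{\beta,I_\tau})^{-1}(B_{\beta,I_\tau}^T y_\beta - \alpha\sigma_\tau)$ together with $u_{\beta,\alpha,j} = 0$ for $j \notin I_\tau$. Substituting this into the strict KKT inequalities $\sigma_{\tau,j} u_{\beta,\alpha,j} > 0$ for $j\in I_\tau$ and $|B_{\beta,j}^T(y_\beta - B_\beta u_{\beta,\alpha})| < \alpha$ for $j\notin I_\tau$ turns each constraint into an affine inequality $p_k(\beta) + q_k(\beta)\alpha > 0$, with $p_k$ and $q_k$ continuous in $\beta$. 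Solving for $\alpha$ produces, depending on the sign of $q_k(\beta)$, either a lower bound $L_k(\beta) = -p_k(\beta)/q_k(\beta)$, an upper bound $U_k(\beta) = -p_k(\beta)/q_k(\beta)$, or a condition on $\beta$ alone. Writing $\tau$ as the intersection of these half-planes yields $\alpha^+_\tau(\beta) = \min_k U_k(\beta)$ and $\alpha^-_\tau(\beta) = \max_k L_k(\beta)$, so continuity of $\alpha^\pm_\tau$ reduces to continuity of each $-p_k/q_k$ where it is binding, which is automatic on the open set $\{q_k \neq 0\}$.

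The delicate step is to rule out bad behaviour at a $\beta_0$ where some $q_k(\beta_0) = 0$. For a type-(a) constraint indexed by $j \in I_\tau$, the coefficients are $p_j(\beta) = \pm((B_{\beta,I_\tau}^T B_{\beta,I_\tau})^{-1}B_{\beta,I_\tau}^T y_\beta)_j$ and $q_j(\beta) = \mp((B_{\beta,I_\tau}^T B_{\beta,I_\tau})^{-1}\sigma_\tau)_j$, so the hypothesis (applied with $I = I_\tau$, $|I_\tau| \le s$) directly forbids simultaneous vanishing. For a type-(b) constraint indexed by $j \notin I_\tau$, the Schur complement identity for the $2\times 2$ block inverse of $B_{\beta,I'}^T B_{\beta,I'}$ with $I' = I_\tau \cup \{j\}$ and extended sign $\sigma' = (\sigma_\tau,\pm 1)$ translates the degenerate cases $q_j(\beta_0) = \pm 1$ and $p_j(\beta_0) = 0$ into simultaneous vanishing of $((B_{\beta,I'}^T B_{\beta,I'})^{-1}\sigma')_j$ and $((B_{\beta,I'}^T B_{\beta,I'})^{-1}B_{\beta,I'}^T y_\beta)_j$, which the hypothesis (with $|I'| \le s+1$) again excludes. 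Consequently, near any such $\beta_0$ the ratio $-p_k/q_k$ diverges to $\pm\infty$ on at least one side and at $\beta_0$ degenerates to a condition on $\beta$ alone; because $\beta_0 \in (\beta^-(\tau), \beta^+(\tau))$ requires $\tau$ to be non-empty at $\beta = \beta_0$, such a condition must be locally non-binding (otherwise $\tau$ would be empty there) and therefore drops out of the min/max expressing $\alpha^\pm_\tau$ near $\beta_0$, leaving only continuous contributions.

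The main obstacle is the bookkeeping in the third paragraph: correctly identifying the coefficients $p_k, q_k$ for both constraint types, and in particular performing the Schur complement computation that links the abstract non-vanishing condition in the hypothesis to the concrete coefficient structure for the off-support constraints. Once this translation is in place, the conclusion follows from the standard fact that the min/max of finitely many continuous functions is continuous.
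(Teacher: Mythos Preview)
Your proposal is correct and takes a genuinely different route from the paper. The paper argues by contradiction: assuming a discontinuity of $\alpha^-_\tau$ at some $\hat\beta$, it selects a sequence $\beta_k\to\hat\beta$ along which a neighbouring tile with enlarged support $I = I_\tau\cup\{j\}$ is visible at two distinct values $\alpha_1<\alpha_2$; passing to the limit in the KKT equalities then forces both $\bigl((B_{\hat\beta,I}^TB_{\hat\beta,I})^{-1}B_{\hat\beta,I}^Ty_{\hat\beta}\bigr)_j$ and $\bigl((B_{\hat\beta,I}^TB_{\hat\beta,I})^{-1}\sigma\bigr)_j$ to vanish, contradicting the hypothesis. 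Your approach is instead direct and structural: you describe the $\alpha$-slice of $\tau$ explicitly as the intersection of affine half-lines $p_k(\beta)+q_k(\beta)\alpha>0$, read off $\alpha^\pm_\tau$ as a $\min/\max$ of the rational functions $-p_k/q_k$, and then localise the only possible failure of continuity to zeros of $q_k$. Your Schur-complement computation for the off-support constraints is precisely what converts the enlarged-index condition of the hypothesis into the concrete statement $p_k(\beta_0)\neq 0$ whenever $q_k(\beta_0)=0$, after which the constraint becomes non-binding and drops out of the envelope.

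Two observations on the comparison. First, your argument treats on-support constraints (via $I=I_\tau$) and off-support constraints (via $I=I_\tau\cup\{j\}$) on equal footing, whereas the paper's contradiction argument is phrased only for an index $j\notin I_\tau$ entering the support; in that sense your write-up is more symmetric. Second, two small points you leave implicit are worth spelling out if you flesh this out: (i) the identification of the $\alpha$-slice of the connected component $\tau$ with that of $\tilde\tau$ (this follows because the preceding lemma shows the $\alpha$-slice of $\tilde\tau$ is an interval, hence connected, hence entirely contained in $\tau$); and (ii) the fact that near any $\beta_0$ with $q_k(\beta_0)=0$ some other constraint still supplies a finite upper bound, so that the divergent $-p_k/q_k$ is indeed non-binding in the $\min$ (for $|I_\tau|\ge 1$ this follows from $\sigma_\tau^T(B_{\beta,I_\tau}^TB_{\beta,I_\tau})^{-1}\sigma_\tau>0$, which forces at least one on-support coefficient $q_j(\beta_0)<0$).
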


\begin{proof}
  Assume to the contrary that, for some tile $\tau$ there exists
  $\beta^-(\tau) < \hat{\beta} < \beta^+(\tau)$ such that the
  function $\alpha^-_\tau$ is discontinuous at $\hat{\beta}$.
  Moreover, denote
  \[
  \alpha_l = \liminf_{\beta \to \hat{\beta}} \alpha^-_\tau(\beta)
  \qquad\text{ and }\qquad
  \alpha_u = \limsup_{\beta \to \hat{\beta}} \alpha^-_\tau(\beta).
  \]
  Then there exists an index $j \not \in I_\tau$ and $\sigma_j \in \{\pm 1\}$,
  as well as a sequence $\beta_k \to \hat{\beta}$ such that
  \[
  \sign(u_{\beta_k,\alpha_1,j}) = \sign(u_{\beta_k,\alpha_2,j}) = \sigma_j
  \]
  for all $k$ and some $\alpha_l \le \alpha_1 < \alpha_2 \le \alpha_u$.
  As a consequence, the KKT conditions~\eqref{eq:KKT}
  imply that
  \[
  B_{\beta_k,j}^T(B_{\beta_k}u_{\beta_k,\alpha_i}-y_{\beta_k}) = -\alpha_i \sigma_j
  \]
  for all $k \in \N$ and $i = 1,\,2$.
  Taking the limit $k \to \infty$ and recalling the continuous
  dependence of $u_{\beta,\alpha}$ on both $\beta$ 
  and $\alpha$ (see Lemma~\ref{le:cont}), it follows that also
  \begin{equation}\label{eq:alphaconth1}
    B_{\hat{\beta},j}^T (B_{\hat{\beta}}u_{\hat{\beta},\alpha_i}-y_{\hat{\beta}}) = -\alpha_i \sigma_j.
  \end{equation}
  On the other hand, the continuous dependence of $u_{\beta,\alpha}$ on both $\beta$
  and $\alpha$ also implies that $u_{\hat{\beta},\alpha,j} = 0$ for all
  $\alpha_l \le \alpha \le \alpha_u$.

  Let now $I = I_\tau \cup \{j\}$.
  Then $\supp(u_{\hat{\beta},\alpha}) \subset I$ for all $\alpha_l \le \alpha \le \alpha_u$,
  and thus, because of~\eqref{eq:alphaconth1}, we have
  \[
  u_{\hat{\beta},\alpha_i,I} = (B_{\hat{\beta},I}^T B_{\hat{\beta},I})^{-1}(B_{\hat{\beta},I}^T y_{\hat{\beta}}-\alpha_i \sigma_I)
  \]
  for all $\alpha_l \le \alpha \le \alpha_u$.
  Because $u_{\hat{\beta},\alpha,j} = 0$ for all these $\alpha$, it follows that both
  \[
  ((B_{\hat{\beta},I}^T B_{\hat{\beta},I})^{-1}\sigma_I)_j = 0
  \]
  and
  \[
  \bigl((B_{\hat{\beta},I}^T B_{\hat{\beta},I})^{-1}(B^T_{\hat{\beta},I}y_{\hat{\beta}}\bigr)_j = 0,
  \]
  which contradicts the assumption that these two terms cannot be simultaneously
  equal to zero.
\end{proof}

\paragraph{Graph structure of the tiling}
In the following we discuss the structure of the
tiling $\mathcal{T}$ in more detail in order to 
to formulate an algorithm for its computation.
To that end, we introduce the structure of a
directed multi-graph on $\mathcal{T}$ by including an edge
from a tile $\tau$ to another tile $\tau'$ for each maximal
open and non-empty interval $S \subset\R$ for which
\begin{equation}
\label{eq:shared_boundary}
\alpha^-_{\tau}(\beta) = \alpha^+_{\tau'}(\beta)
\qquad\text{ for all } \beta \in S.
\end{equation}
Two tiles are connected by an edge $e$ whenever
they share a common boundary, and we define the edge
to run from the tile with larger values of $\alpha$
to the tile with smaller values of $\alpha$.
Moreover, we denote
by $S_e$ the maximal interval for which~\eqref{eq:shared_boundary} holds.
\begin{figure}[t]
\centering
\begin{tikzpicture}[auto, inner sep = 0pt]
\begin{scope}
\node[inner sep=0pt, outer sep=0pt, anchor=south west] (image) at (-14,0)
{\includegraphics[width=\textwidth]{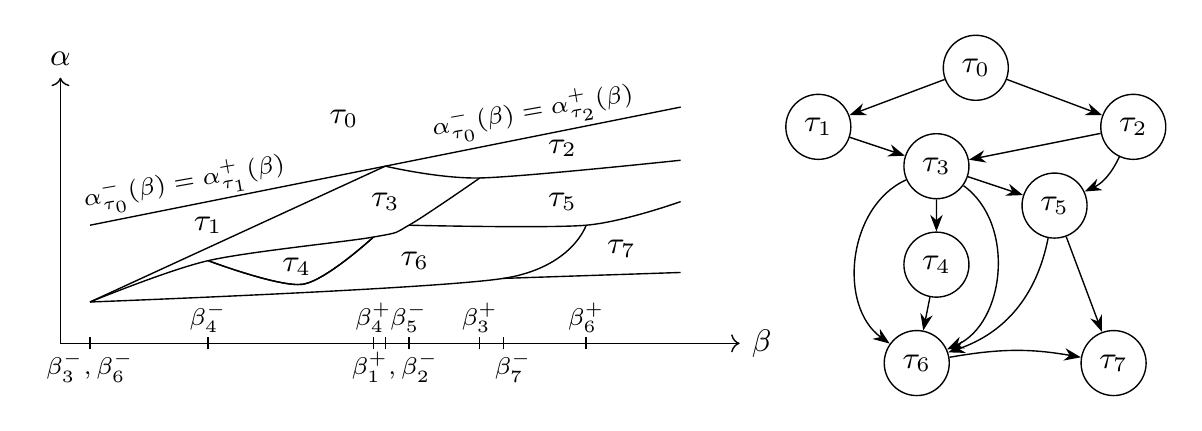}};
\end{scope}
\end{tikzpicture}
\caption{Example of part of a tiling (\emph{left}) and the corresponding directed multigraph (\emph{right}).
Note that the tile $\tau_3$ is connected to the tile $\tau_6$ via two different edges. The $\beta$ ticks on the $x$-axis are the left and the right borders of the tiles, i.e., $\beta_j^{\pm} = \beta^\pm(\tau_j)$. 
}
\label{fig:tiling_graph_explanation}
\end{figure}
We note here that it is possible that two tiles are connected
by several edges, if their common boundary consists of disjoint intervals, as depicted in 
 Figure \ref{fig:tiling_graph_explanation} (left panel).

Now let $\tau$ be any tile with edges $e_1$,\ldots, $e_k$ starting at $\tau$.
Then each edge $e_j$ corresponds to a maximal open subinterval $S_{e_j}$ of
$(\beta^-(\tau),\beta^+(\tau))$, and $S_{e_j} \cap S_{e_i} = \emptyset$
for all $i\neq j$. Therefore we can define an order on the set of edges
starting in $\tau$ by setting $e_j < e_i$ if $S_{e_j} < S_{e_i}$.
Also, we can order the edges leading into $\tau$ in analogous manner.

In order to find simultaneously the edges and the functions
$\alpha_\tau^\pm(\beta)$, it is possible to employ an adaptation of
the Lasso path algorithm to the multi-parameter problem as presented in Lemma \ref{le:lassopath}.

\begin{lemma}
Assume that $e$ is an edge leading from the tile $\tau$ into tile
$\tau'$, and that $S_e$ is the interval on which~\eqref{eq:shared_boundary}
holds. For $1 \le k\le n$ denote $\tilde{\alpha}_{k}(\tau,\beta) : = \tilde{\alpha}_{k}(I_\tau, \sigma_\tau,\beta)$ as defined in \eqref{eq:talpha} and let 
\begin{equation}\label{eq:K}
\mathcal{K}(\tau,\beta)
:= \{k: \tilde{\alpha}_k(\tau,\beta) < \alpha^+_\tau(\beta)\}.
\end{equation}
Then for each $\beta \in S_e$ and each $j \in I_{\tau} \Delta I_{\tau'},$
we have $j \in \mathcal{K}(\tau,\beta)$ 
and
\begin{equation}\label{eq:alpha_tau1}
\alpha^-_\tau(\beta)
= \alpha^+_{\tau'}(\beta)
=
\tilde{\alpha}_{j}(\tau,\beta) = \max_{k\in\mathcal{K}(\tau,\beta)} \tilde{\alpha}_{k}(\tau,\beta).
\end{equation}
If additionally $j \in I_{\tau'} \setminus I_\tau$, then $ \sigma_{\tau',j}$ is equal to the parameter $\gamma$ in the definition of $\tilde{\alpha}_{j}(I_\tau, \sigma_\tau,\beta)$, see \eqref{eq:talpha}.
\end{lemma}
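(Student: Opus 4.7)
The plan is to reduce this multi-parameter assertion to the single-parameter Lasso-path result of Lemma~\ref{le:lassopath} by freezing $\beta \in S_e$ and recognising the transition between the tiles $\tau$ and $\tau'$ as exactly one knot of the ordinary Lasso path associated to the reduced problem~\eqref{eq:minu}. First I would fix an arbitrary $\beta \in S_e$ and use Lemma~\ref{lem:continuous} together with the definition of the edge $e$ to conclude that $\alpha_\tau^-$ and $\alpha_{\tau'}^+$ are continuous on $S_e$ and agree there. By definition of the tiling, the support/sign pattern of $u_{\beta,\alpha}$ equals $(I_\tau,\sigma_\tau)$ on the whole open interval $(\alpha_\tau^-(\beta),\alpha_\tau^+(\beta))$ and equals $(I_{\tau'},\sigma_{\tau'})$ on $(\alpha_{\tau'}^-(\beta),\alpha_{\tau'}^+(\beta))$. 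In particular, $\alpha_\tau^-(\beta)$ is a point at which the support changes, and since no support change occurs within the open interval above it, it must be the immediate successor of $\alpha_\tau^+(\beta)$ among the knots of the single-parameter Lasso path for~\eqref{eq:minu} at this $\beta$.

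Next I would invoke Lemma~\ref{le:lassopath} with the identifications $\alpha^{(k)}(\beta) = \alpha_\tau^+(\beta)$, $I^{(k)}(\beta) = I_\tau$ and $\sigma^{(k)}(\beta) = \sigma_\tau$. Under these choices the set $\mathcal{K}_k(\beta)$ from that lemma coincides with the set $\mathcal{K}(\tau,\beta)$ defined in~\eqref{eq:K}, and the knot formula~\eqref{eq:alpha_tau0} yields
\[
\alpha_\tau^-(\beta) \;=\; \alpha^{(k+1)}(\beta) \;=\; \max_{k \in \mathcal{K}(\tau,\beta)} \tilde\alpha_k(\tau,\beta).
\]
Moreover, Lemma~\ref{le:lassopath} characterises the new support $I^{(k+1)}$ by adding all indices outside $I^{(k)}$ that attain this maximum and removing all indices inside $I^{(k)}$ that attain it, so that the set of maximisers is precisely $I^{(k)} \Delta I^{(k+1)} = I_\tau \Delta I_{\tau'}$. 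This establishes $j \in \mathcal{K}(\tau,\beta)$ and $\tilde\alpha_j(\tau,\beta) = \alpha_\tau^-(\beta)$ for every $j \in I_\tau \Delta I_{\tau'}$, which completes~\eqref{eq:alpha_tau1}. The sign statement for $j \in I_{\tau'}\setminus I_\tau$ then follows from the last clause of Lemma~\ref{le:lassopath}, as each such $j$ is a newly activated coordinate.

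The main obstacle I anticipate is justifying rigorously that the support and sign pattern are genuinely constant on the \emph{entire} open interval $(\alpha_\tau^-(\beta),\alpha_\tau^+(\beta))$, so that $\alpha_\tau^-(\beta)$ is the immediate successor knot of $\alpha_\tau^+(\beta)$ rather than a later one lying past some intermediate support change. For this I would lean on the interval property for the slices of a tile established in the lemma preceding Lemma~\ref{lem:continuous}, together with the non-degeneracy hypothesis of Lemma~\ref{lem:continuous}, which rules out the simultaneous vanishing of the two quantities in~\eqref{eq:talpha} for the relevant subsets $I$ and therefore excludes degenerate ``phantom knot'' scenarios in which a coordinate of the closed-form solution vanishes without an actual change of support. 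Once that bookkeeping is in place, the reduction to Lemma~\ref{le:lassopath} goes through verbatim.
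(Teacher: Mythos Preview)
Your proposal is correct and is exactly the argument the paper has in mind: the paper does not actually supply a separate proof for this lemma, treating it as an immediate restatement of Lemma~\ref{le:lassopath} in the tiling language, and your reduction---freeze $\beta\in S_e$, identify $\alpha_\tau^+(\beta)$ with the current knot $\alpha^{(k)}$ and $(I_\tau,\sigma_\tau)$ with $(I^{(k)},\sigma^{(k)})$, then read off~\eqref{eq:alpha_tau1} and the sign clause from~\eqref{eq:alpha_tau0}---is precisely that. The ``obstacle'' you flag is already handled by the interval lemma preceding Lemma~\ref{lem:continuous}, so no extra work is needed there.
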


\begin{remark}
  The previous lemma provides a method for simultaneously computing 
  the functions $\alpha^-_\tau(\beta)$ and finding the edges leading
  out of $\tau$: In the special case where for some given $\beta$
  the maximum in~\eqref{eq:alpha_tau1}
  is attained at a single index $k$,
  the neighboring tile $\tau'$ has the support $I_\tau \cup \{k\}$
  if $k \not\in I_\tau$ and $I_\tau\setminus\{k\}$ otherwise.
  Also, the boundary between $\tau$ and $\tau'$ is in a neighborhood
  of $\beta$ given by the function $\tilde{\alpha}_{k}(\tau,\cdot)$.
\end{remark}

In order to simplify further discussion, we introduce the notion of  parents
and children. Given a tile $\tau$, we denote by $\mathcal{P}(\tau)$
the set of its parents, that is, all tiles with edges leading to $\tau$,
and by $\mathcal{C}(\tau)$ its children, that is, all tiles with edges starting
from $\tau$.
Using the order of the edges between an edge $\tau$ and its children,
we can in particular define the youngest child $C^-_\tau$ and the oldest child $C^+_\tau$
in the following way:
We denote by $C_\tau^-$ the child of $\tau$ corresponding to the minimal edge
starting in $\tau$, and by $C_\tau^+$ the child corresponding to the maximal edge.
Because there might be multiple edges between $\tau$ and any of its children,
it can happen that $C_\tau^- = C_\tau^+$ even though the tile $\tau$ has multiple
children.
In an analogous manner, we define $P_\tau^-$ and $P_\tau^+$ to be the tile
corresponding to the minimal and maximal edge leading into $\tau$ and call
these tiles the youngest and the oldest parent of $\tau$, respectively.

Finally, we recall that for each parameter $\beta > 0$ and sufficiently large $\alpha$, we have
$u_{\beta,\alpha} = 0$.
As a consequence, the directed graph describing the tiling is actually
rooted, with the root given by the tile corresponding to
the zero solution with support set $I_\tau = \emptyset$.
We denote this root tile as $\tau_0$.
This is also the only tile that does not have any parents, and
all other tiles are descendants of $\tau_0$.  Figure \ref{fig:tiling_graph_explanation} depicts the directed graph (right panel) representing the tiling (left panel).

\section{Algorithmic approach}
\label{sec:algorithm}

In the following, we will discuss an algorithmic approach to
the construction of the complete support tiling, that is,
of all the tiles $\tau$ together with the corresponding
supports $I_\tau$ and sign patterns $\sigma_\tau$,
and the boundaries of $\tau$, which are given
by $\beta^\pm(\tau)$ and
the functions $\alpha^\pm\colon (\beta^-(\tau),\beta^+(\tau)) \to \R$.
Starting with the tile $\tau_0$, we construct the lower boundary
$\alpha^-(\tau)$ and the children $\mathcal{C}(\tau)$
of the current tile $\tau$ using equation~\eqref{eq:alpha_tau1}.
To that end, we will first subdivide the interval $(\beta^-(\tau),\beta^+(\tau))$
into subintervals on which the index set $\mathcal{K}(\tau,\beta)$
is constant. On each of these subintervals, the indices 
that are either added to or removed from the current support
can be identified as $\argmax_{k} \tilde{\alpha}_{k}(\tau,\beta)$.
In order to simplify the algorithm, we will assume that
these maxima are attained at a single index for almost all parameters $\beta$.
We note that this is not a severe restriction as it
holds for almost all matrices $A$.
Also, a similar restriction has been used in the original Lasso path algorithm \cite{efhajoti04}.

\begin{postulate}
  For each tile $\tau$ there are at most finitely many
  values of $\beta^-(\tau) < \beta < \beta^+(\tau)$ such
  that the maximum in~\eqref{eq:alpha_tau1} is attained
  simultaneously for different indices $j$.
\end{postulate}

Since the functions $\tilde{\alpha}_{k}$, which define the
maximum in~\eqref{eq:alpha_tau1}, are rational, this
is equivalent to the assumption that all these functions are
different. As a consequence, this assumption is very weak and can be safely assumed to hold in all practical situations.
\medskip

Each tile $\tau$ processed by the algorithm is defined
by means of its support $I_\tau$ and sign structure $\sigma_\tau$,
together with a range of $\beta$ values $(\beta^-(\tau),\beta^+(\tau))$
and all of its parents with edges defined over these $\beta$ values.
In particular, this means that its upper boundary $\alpha^+_\tau$
is well-defined for the given $\beta$ range.
However, we do not necessarily assume that all the children
of $\tau$ have already been constructed, and thus the lower
boundary $\alpha^-_\tau$ need not be well-defined everywhere.
In such a case, we say that the tile $\tau$ is incomplete.
See Figure~\ref{fig:illustration_non_exhaustive_parameters}
for a sketch of a situation where this occurs.

In each step of the algorithm, we choose an incomplete tile $\tau$.
Then we compute all of its children together with the function
$\alpha^-_\tau$ and thus complete the tile.
After its completion, we merge newly created children
with previously established tiles, if necessary.

\subsection{Computation of children}
\label{se:finding_successors}
\begin{figure}[t]
\centering
    \begin{subfigure}[b]{0.475\textwidth}
        \includegraphics[width=\textwidth]{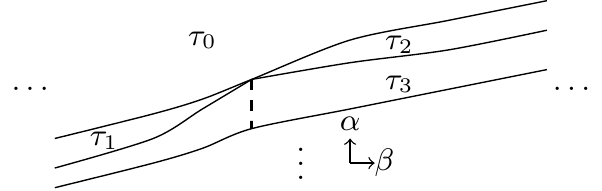}
        \caption{}
        \label{fig:illustration_non_exhaustive_parameters_1}
    \end{subfigure}
    \begin{subfigure}[b]{0.475\textwidth}
        \includegraphics[width=0.7\textwidth]{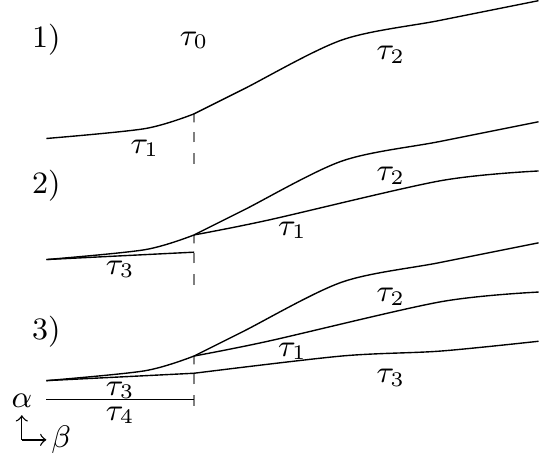}
        \caption{}
        \label{fig:illustration_non_exhaustive_parameters_2}
    \end{subfigure}
\caption{Exemplary extracts of support tilings where incomplete ranges upon the first discovery occur. (a): Coming from $\tau_1$ or $\tau_2$, the first discovery of $\tau_3$ would always only reveal the left (from $\tau_1$) or right (from $\tau_2)$) part of $(\beta^-(\tau_3),\beta^+(\tau_3))$. (b): In Step 1) we discover a part of $\tau_1$ and the complete $\tau_2$. In Step 2), we discover a part of $\tau_3$ from $\tau_1$ and subsequently the rest of $\tau_1$ through $\tau_2$. Hence we search for children of $\tau_1$ before having discovered it completely and this situation will repeat itself more often.}
\label{fig:illustration_non_exhaustive_parameters}
\end{figure}

Assume that we are given an incomplete tile $\tau$
with $\beta$-range $S_\tau := (\beta^-(\tau),\beta^+(\tau))$
and current (incomplete) set of children $\mathcal{C}(\tau)$.
We denote by $\mathcal{E}(\tau)$ the set of edges
connecting $\tau$ with its children.
Moreover, we denote by 
\[
T_\tau := \bigcup_{e\in\mathcal{E}(\tau)} \overline{S}_e
\]
the subset of $S_\tau$ where $\alpha^-(\tau)$ is known.

We next compute a subdivision of $S_\tau\setminus T_\tau$ into subintervals
on each of which the index set $\mathcal{K}(\tau,\beta)$ is constant, see~\eqref{eq:K}.
Given a connected component $L$ of $S_\tau\setminus T_\tau$,
any of the functions $\tilde{\alpha}_{k}(\tau,\cdot)$
either is below $\alpha^+_\tau$ on the whole interval $L$,
above $\alpha^+_\tau$ on the whole interval $L$,
or has discontinuities within $L$.
In the latter case, because of the definition of the functions
$\tilde{\alpha}_k(\tau,\cdot),$ see~\eqref{eq:talpha},
these discontinuities can only occur at the parameters $\beta$ for
which either 
\begin{equation}\label{eq:subdiv}
\bigl((B_{\beta,I_\tau}^T B_{\beta,I_\tau})^{-1}\sigma_\tau\bigr)_k = 0
\qquad\text{ if } k\in I_\tau,
\end{equation}
or
\[
B_{\beta,k}^TB_{\beta,I_\tau}(B_{\beta,I_\tau}^T B_{\beta,I_\tau})^{-1}\sigma_\tau=\gamma
\qquad\text{ if } k\not\in I_\tau.
\]
In the following result, we will show that, actually,
it is only the first of these cases that can occur.

\begin{lemma}\label{lemma:indices_new}
  For all $k\not\in I_\tau$ we have
  \[
  B_{\beta,k}^TB_{\beta,I_\tau}(B_{\beta,I_\tau}^T B_{\beta,I_\tau})^{-1}\sigma_\tau \neq 
  \gamma,
  \]
  where $\gamma \in \{\pm 1\}$ is defined as in~\eqref{eq:talpha}.
\end{lemma}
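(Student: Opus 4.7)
The plan is to argue by contradiction: suppose $r := B_{\beta,k}^T B_{\beta,I_\tau}(B_{\beta,I_\tau}^T B_{\beta,I_\tau})^{-1}\sigma_\tau$ equals $\gamma \in \{\pm 1\}$ and derive an impossibility from the KKT conditions on the tile. First I would use the KKT equalities for indices $i \in I_\tau$ (which are active on $\tau$ with sign $\sigma_{\tau,i}$) to solve explicitly for the on-support values of $u_{\beta,\alpha}$, obtaining $u_{\beta,\alpha,I_\tau} = (B_{\beta,I_\tau}^T B_{\beta,I_\tau})^{-1}(B_{\beta,I_\tau}^T y_\beta - \alpha \sigma_\tau)$. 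Substituting this into the KKT inequality for an inactive index $k \notin I_\tau$ reshapes it into $|N + \alpha r| \leq \alpha$, where $N$ is precisely the numerator of $\tilde{\alpha}_k$ in the first case of \eqref{eq:talpha}. Crucially, this inequality holds for every $\alpha$ in the vertical slice of $\tau$ at $\beta$, i.e., for all $\alpha \in [\alpha^-_\tau(\beta), \alpha^+_\tau(\beta)]$.

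Next I would specialise to $r = 1$, since the $r = -1$ case is entirely symmetric. The inequality becomes the two-sided bound $-2\alpha \leq N \leq 0$, and sending $\alpha \to \alpha^-_\tau(\beta)$ tightens this to $-2\alpha^-_\tau(\beta) \leq N \leq 0$. I would then split on the two possibilities for $\gamma$ in \eqref{eq:talpha}. In the \emph{natural} case where $\gamma = \sign(N)$, the identification $\gamma = r = 1$ forces $N > 0$, which immediately contradicts $N \leq 0$.

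The more delicate situation is the \emph{re-entry} case, in which $k$ was in the support of the tile $\tau'$ immediately above $\tau$ at $\beta$ and $\gamma = -\sign(N)$. Here $\gamma = 1$ only forces $N < 0$, which is consistent with the KKT bound. I would close this case by importing information from $\tau'$: the KKT equality $B_{\beta,k}^T(y_\beta - B_\beta u_{\beta,\alpha}) = \alpha \sigma_{\tau',k}$ on $\tau'$, together with the continuity of $u_{\beta,\alpha}$ across the shared boundary (Lemma~\ref{le:cont}), gives $N + \alpha^+_\tau(\beta) = \alpha^+_\tau(\beta)\,\sigma_{\tau',k}$, so $N = \alpha^+_\tau(\beta)(\sigma_{\tau',k} - 1)$. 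The constraint $N < 0$ then forces $\sigma_{\tau',k} = -1$ and $N = -2\alpha^+_\tau(\beta)$; combining with $N \geq -2\alpha^-_\tau(\beta)$ yields $\alpha^+_\tau(\beta) \leq \alpha^-_\tau(\beta)$, contradicting the non-degeneracy of the slice.

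I expect the re-entry case to be the main obstacle, since the asymmetric convention for $\gamma$ in \eqref{eq:talpha} prevents a purely local argument on $\tau$ from closing out the proof. One is forced to invoke continuity of the regularised solution across the upper boundary of $\tau$ and exploit the active KKT equality in the parent tile; the natural case, by contrast, follows in a single line from the KKT inequality alone.
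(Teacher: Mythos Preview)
Your proposal is correct and follows essentially the same route as the paper: both arguments reduce the KKT inequality for $k\notin I_\tau$ to $|N+\alpha r|\le\alpha$ on the whole $\alpha$-slice of $\tau$, dispose of the natural case in one line, and handle the re-entry case by showing $N=-2\gamma\,\alpha^+_\tau(\beta)$ and then contradicting the inequality at some $\alpha<\alpha^+_\tau(\beta)$. The only cosmetic difference is that the paper reads the identity $N=-2\gamma\,\alpha^+_\tau(\beta)$ directly off the Lasso-path formula for the upper boundary, whereas you recover it from the active KKT equality in the parent tile $\tau'$ together with continuity of $u_{\beta,\alpha}$ across $\alpha^+_\tau(\beta)$; these are two phrasings of the same boundary condition.
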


\begin{proof}
  The optimality conditions~\eqref{eq:KKT} imply that
  \begin{equation}\label{eq:indices_1}
  \lvert B_{\beta,k}^T(y_\beta - B_{\beta,I_\tau}u_{\beta,\alpha,I_\tau})\rvert \le \alpha
  \end{equation}
  for all $\alpha^-(\beta) \le \alpha \le \alpha^+(\beta)$.
  Moreover,
  \[
  u_{\beta,\alpha,I_\tau} = (B_{\beta,I_\tau}^T B_{\beta,I_\tau})^{-1}(B_\beta^T y_\beta - \alpha \sigma_\tau).
  \]
  Inserting this into~\eqref{eq:indices_1}, we obtain the inequality
  \begin{equation}\label{eq:indices_2}
  \lvert B_{\beta,k}^T(\Id-B_{\beta,I_\tau}(B_{\beta,I_\tau}^TB_{\beta,I_\tau})^{-1}B_{\beta,I_\tau}^T)y_{\beta} + \alpha B_{\beta,k}^TB_{\beta,I_\tau}(B_{\beta,I_\tau}^TB_{\beta,I_\tau})^{-1}\sigma_\tau\rvert
  \le \alpha
  \end{equation}
  for all $\alpha^-(\beta) \le \alpha \le \alpha^+(\beta)$.

  Assume now that $k$ was not contained in the support of $u_{\beta,\alpha}$ in the
  previous step of the Lasso path algorithm. Then, see Lemma~\ref{le:lassopath},
  \[
  \gamma = \sign(B_{\beta,k}^T(\Id - B_{\beta,I_\tau}(B_{\beta,I_\tau}^TB_{\beta,I_\tau})^{-1}B_{\beta,I_\tau}^T)y_{\beta}).
  \]
  If additionally $B_{\beta,k}^TB_{\beta,I_\tau}(B_{\beta,I_\tau}^TB_{\beta,I_\tau})^{-1}\sigma_\tau = \gamma$,
  then~\eqref{eq:indices_2} reduces to an inequality of the form
  \[
  \lvert r + \alpha \sign(r) \rvert \le \alpha
  \]
  with $r \neq 0$, which is impossible.

  Conversely, if $k$ was contained in the support of $u_{\beta,\alpha}$ in the previous
  step of the Lasso path algorithm, then the upper
  boundary of the tile $\tau$ is at the point $\beta$ given by
  \[
  \alpha^+(\beta) = \frac{B_{\beta,k}^T(\Id - B_{\beta,I_\tau}(B_{\beta,I_\tau}^TB_{\beta,I_\tau})^{-1}B_{\beta,I_\tau}^T)y_{\beta}}{-\gamma - B_{\beta,k}^TB_{\beta,I_\tau}(B_{\beta,I_\tau}^TB_{\beta,I_\tau})^{-1}\sigma_\tau}.
  \]
  Thus
  \[
  B_{\beta,k}^T(\Id - B_{\beta,I_\tau}(B_{\beta,I_\tau}^TB_{\beta,I_\tau})^{-1}B_{\beta,I_\tau}^T)y_{\beta}
  = -\alpha^+(\beta)(\gamma+B_{\beta,k}^TB_{\beta,I_\tau}(B_{\beta,I_\tau}^TB_{\beta,I_\tau})^{-1}\sigma_\tau).
  \]
  If in this case $B_{\beta,k}^TB_{\beta,I_\tau}(B_{\beta,I_\tau}^TB_{\beta,I_\tau})^{-1}\sigma_\tau = \gamma$,
  then we obtain from~\eqref{eq:indices_2} the inequality
  \[
  \lvert -2\gamma \alpha^+(\beta) + \alpha \gamma \rvert \le \alpha
  \]
  for all $\alpha^-(\beta) \le \alpha \le \alpha^+(\beta)$,
  which is again impossible.
\end{proof}

As a consequence of Lemma~\ref{lemma:indices_new}, if we subdivide
the set $S_{\tau}\setminus T_\tau$ into intervals
\begin{equation}\label{eq:Ttau}
S_{\tau}\setminus T_\tau = \mathop{\dot{\bigcup}}_{k=1}^K (\beta_k^-,\beta_k^+),
\end{equation}
where the boundary points $\beta_k^\pm$ of these intervals
are either original boundary points of $S_\tau$ or points
$\beta$ for which $\bigl((B_{\beta,I_\tau}^T B_{\beta,I_\tau})^{-1}\sigma_\tau\bigr)_i = 0$
for some $i\in I_\tau$, then the set $\mathcal{K}(\tau,\beta)$
remains constant over each of the subintervals $(\beta_k^-,\beta_k^+)$.

Choose now one of these subintervals $(\beta_k^-,\beta_k^+)$
and let $\mathcal{K}_k := \mathcal{K}(\tau,\beta)$ for any $\beta \in (\beta_k^-,\beta_k^+)$.
Then we can compute a preliminary set of all children of
$\tau$ within the interval $(\beta_k^-,\beta_k^+)$ as follows:
\begin{algorithmic}
	\STATE 1. Start with $\beta = \beta_k^{-}$.
	\STATE 2. Compute the index $j$ for which
  \[
  j = \argmax\{ \tilde{\alpha}_{i}(\tau,\beta+\epsilon): i \in \mathcal{K}_k\}
  \]
  for some small $\epsilon>0$.
  \STATE 3. Add a child $\tilde{\tau}$ with $\beta^-(\tilde{\tau}) = \beta$ to $\tau$:
  \bindent
\IF{$j \in \tau$}
        \STATE Let $I_{\tilde{\tau}} = I_\tau \setminus\{j\}$ and
  $\sigma_{\tilde{\tau},i} = \sigma_{\tau,i}$ for $i \in I_\tau$, $i\neq j$.
    \ELSE
        \STATE Let $I_{\tilde{\tau}} = I_\tau \cup\{j\}$, $\sigma_{\tau,j} = \gamma$,
  and $\sigma_{\tilde{\tau},i} = \sigma_{\tau,i}$ for $i \in I_\tau$.
    \ENDIF
    \eindent
    \STATE 4. Compute the next solution $\tilde{\beta} > \beta$
    of any of the equations $\tilde{\alpha}_i(\tau,\beta) = \tilde{\alpha}_j(\tau,\beta)$,
    $i\in\mathcal{K}_k$:
  \bindent
  \IF{$\tilde{\beta} < \beta_k^+$}
  		\STATE Set $\beta^+(\tilde{\tau}) = \tilde{\beta}$ and repeat from 2 with $\beta = \tilde{\beta}$.
  \ELSE
 		\STATE Set $\beta^+(\tilde{\tau}) = \beta_k^+$ and stop.
   \ENDIF  
   \eindent
\end{algorithmic}

Now assume that all subintervals $(\beta_k^-,\beta_k^+)$ have
been processed in that manner.
Then the tile $\tau$ is completed in the sense that 
its lower boundary $\alpha_\tau^-$ is defined everywhere
and that a preliminary set of children of $\tau$ is defined on the whole
range of values of $\beta$ in $S_\tau$.
However, it is possible that some of these preliminary children
actually should be merged together, because they point to same tile.
This will occur at the boundaries of the subintervals
$(\beta^-_k,\beta_k^+)$, as all of these subintervals
have been processed independently from each other.

In order to merge children, we may scan through
all preliminary children of $\tau$ from oldest to youngest.
If we then find two adjacent children with same support $\tilde{\tau}$
and sign pattern $\tilde{\sigma}$, we merge them in the sense
that we treat them as only one child. The corresponding $\beta$ range for this new child is the union of the ranges of the preliminary children.

\subsection{Merging procedure}
\label{se:algorithm_merging}

After a tile $\hat{\tau}$ has been processed as described
in Section~\ref{se:finding_successors}, children of this tile
will be defined on the whole interval $(\beta^-(\hat{\tau}),\beta^+(\hat{\tau}))$.
However, the children defined on the boundaries of this interval,
that is, the oldest and the youngest child $C^\pm_{\hat{\tau}}$ of $\hat{\tau}$,
might coincide with children from neighboring tiles to the left or right,
which again might require the merging of these children.
In order to perform this merging, we make use of the ordered graph
structure we imposed on the tiling.

We first consider the youngest child $C^-_{\hat{\tau}}$. Starting with
$\hat{\tau}$, we trace back the line of youngest parents, until we
reach an ancestor $\tilde{\tau}$ for which the branch from
$\tilde{\tau}$ to $C^-_{\hat{\tau}}$ does not start with the youngest
child of $\tilde{\tau}$. In case no such tile exists, the tile
$C^-_{\hat{\tau}}$ has no possible merging partners. 
Now denote by $\tau'$ the child of $\tilde{\tau}$ in the line leading
to $C^-_{\hat{\tau}}$, and let $\tau''$ be its next younger
sibling. Then the potential merging partners for $C^-_{\hat{\tau}}$
are precisely the tiles in the line of oldest children of
$\tau''$. We therefore follow this line until we find a tile that can
be merged with $C^-_{\hat{\tau}}$. Again, in case no such tile exists,
the tile $C^-_{\hat{\tau}}$ has no potential merging partners, see Figure~\ref{code:find_left_candidate}.

For the oldest child $C^+_{\hat{\tau}}$ the approach is
similar. However, here we trace back the line of oldest parents until
we find an ancestor for which the branch to $C^-_{\hat{\tau}}$ does
not start with its oldest child. Then we follow the line of youngest
children of this tile's next older sibling in order to find
possible merging partners of $C^+_{\hat{\tau}}$.

Note that the result of this merging procedure will always be an
incomplete tile, even if the tile we merge $C^\pm_{\hat{\tau}}$ with has
already been processed before.
Also, it is possible that both merging operations have
to be performed in case the tile $\hat \tau$
has only a single child $C^+_{\hat{\tau}} = C^-_{\hat{\tau}}$,
see Figure~\ref{fig:full_merging_procedure}.

\begin{figure}[t]
\centering
\begin{tikzpicture}[auto, inner sep = 0pt]
\begin{scope}
\node[inner sep=0pt, outer sep=0pt, anchor=south west] (image) at (-14,0)
{\includegraphics[width=\textwidth]{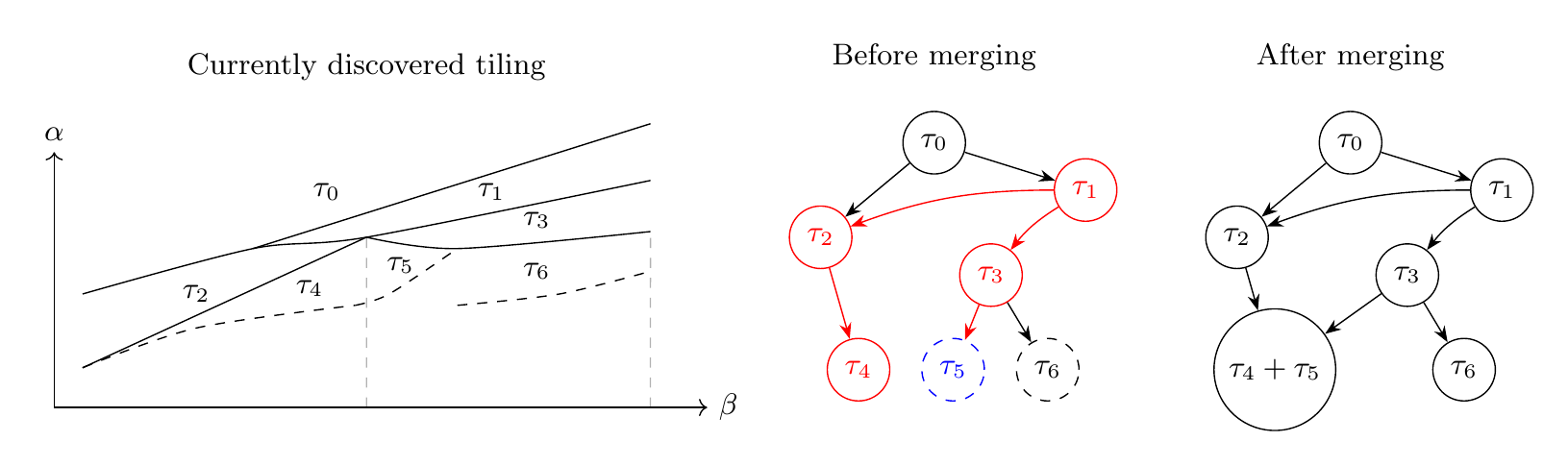}};
\end{scope}
\end{tikzpicture}
\caption{The merging procedure is performed by using graph structure, imposed on the tiling (left), and analysing the youngest child of tile $\tau_3$ as indicated on the right.}
\label{code:find_left_candidate}
\end{figure}

\begin{figure}[t]
\centering
\begin{tikzpicture}[auto, inner sep = 0pt]
\begin{scope}
\node[inner sep=0pt, outer sep=0pt, anchor=south west] (image) at (-14,0)
{\includegraphics[width=\textwidth]{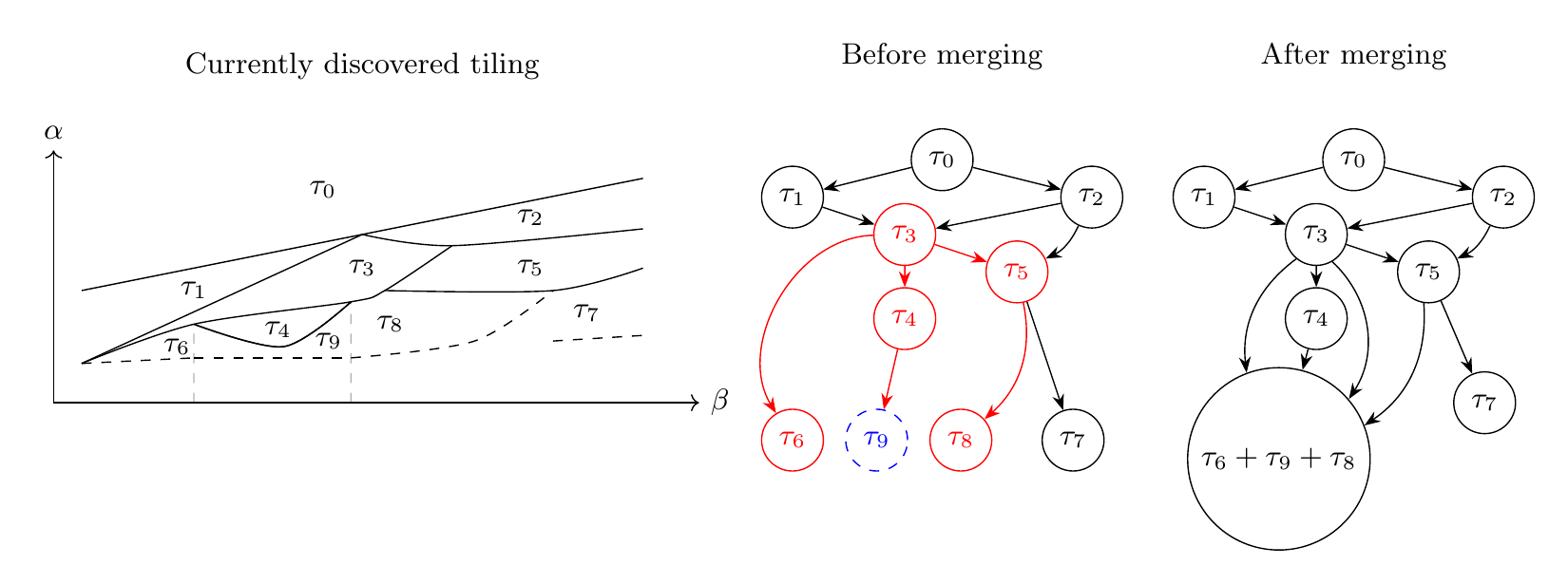}};
\end{scope}
\end{tikzpicture}
\caption{Despite the processed tile $\tau_4$ having only a single child after the computation of the children, potential merging can happen in both directions as indicated on the right.}
\label{fig:full_merging_procedure}
\end{figure}

\subsection{Complete algorithm}

Fix an interval $(\beta_{\min}, \beta_{\max})$ and the upper bound for the support size $s$. One obtains the tiles containing solutions with sparsity level up to $s$ within the given $\beta-$range as follows:
\begin{algorithmic}
\STATE  Initiate $\hat{\mathcal{T}}_a = \left\{\hat{\tau}_0 : I_{\hat{\tau}_0} = \emptyset,\ \sigma_{\hat{\tau}_0} = 0,\  S_{\hat{\tau}_0} = (\beta_{\textrm{min}},\beta_{\textrm{max}})\right\}$.
  \WHILE{$\hat{\mathcal{T}}_a$ contains an uncompleted $\hat{\tau}$ with $|I_{\hat{\tau}}| < s$}
  		\STATE Pick an uncompleted $\hat{\tau}$ with $|I_{\hat{\tau}}| < s$.
  		\STATE $R_{\hat{\tau}} \gets \textit{findChildren}(A, y, \hat{\tau})$.
 		\STATE Tentatively assign all $\tau' \in R_{\hat{\tau}}$ as children of $\hat{\tau}$.
		\STATE $\hat{\mathcal{T}}_a \gets \textit{mergeChildren}(R_{\hat{\tau}})$.
   \ENDWHILE
\end{algorithmic}

In Sections \ref{se:finding_successors} and \ref{se:algorithm_merging}
above we have discussed the general framework of the proposed
algorithm, each step of which consists in the completion of a tile,
the construction of its children, and then the possible merger of the
newly created children with existing tiles. However, we have completely left open
the question of how to choose the next tile to be processed.

In practical applications, we may assume that we are given a range of
values $(\beta_{\min},\beta_{\max})$ as well as an upper bound $s$ for the
size of the support of the vector we want to reconstruct. Thus our
goal is the construction of all tiles of support size up to $s$ with
$\beta$-ranges intersecting the interval
$(\beta_{\min},\beta_{\max})$. In order to do this efficiently, it is
natural to compute first the tiles with the smallest support. That
is, we always choose the next tile to be processed amongst those
incomplete tiles $\tau$ where $\# I_\tau$ is the smallest.

If there are several different incomplete tiles with the same support
size, we propose to process tiles that already have children before
other tiles. 
Amongst these tiles we first process those with the smallest lower
bound $\beta^-(\tau)$ of their $\beta$-ranges.
That is, we effectively process the tiles from youngest to oldest
tile, although the opposite order would equally make sense.
In the possible, though highly unlikely, case when there exist several
tiles with the same support size and same lower $\beta$ bound, we process the oldest of them first, that is, the one with the
largest value of $\alpha_\tau^-(\beta^-(\tau))$.

\subsection{LARS algorithm}

As an alternative to the Lasso path algorithm, it is also possible to
compute the possible supports of the solution using the LARS
algorithm, although there is no immediate connection to the proposed
multi-penalty functional \eqref{eq:multi}.
The LARS algorithm differs from the Lasso path algorithm by
disallowing entries that have joined the support to leave it again
when the parameter $\alpha$ is decreased. This means that for the
computation of $\alpha_\tau^-$ in~\eqref{eq:alpha_tau1} the maximum is
only taken over indices $k$ not contained in $I_\tau$. This simplifies
the computations somewhat, as the number of functions used in the
computation of $\alpha_\tau^-$ decreases.
Even more, the creation of a subdivision of the $\beta$ range of
the currently processed tile and thus the solution of the
equations~\eqref{eq:subdiv} is not necessary for the LARS algorithm.

Another advantage of using this alternative method is that it results
in the graph of the tiling consisting of different layers for the
different support sizes. If we define
\begin{equation*}
L_k = \left\{ \tau \in \mathcal{T}: \# I_{\tau} = k  \right\}
\end{equation*}
as the set of tiles with support size $k$, then the parents of tiles in
the layer $L_k$ are all contained in the layer $L_{k-1}$ and vice
versa. In particular, if we want to compute all the supports up to
support size $s$, this can be easily achieved by constructing these
different layers one at a time until we reach $L_s$.
In contrast, if the Lasso path algorithm is used, the resulting graph
cannot be expected to have the same layered structure, and it might be
necessary to also compute tiles with support larger than $s$.
In addition, the merging of children becomes much simpler with the
LARS algorithm, as the only potential merging partners of a tile are
its immediate older and younger sibling, respectively.

Finally we note that it has been reported numerous times in the literature
\cite{efhajoti04} that the Lasso path and LARS algorithm differ barely
for high-dimensional problems. We observed the same behaviour in our numerical experiments.\footnote{Follow the links to repositories in Section \ref{se:numerical_experiments} to see results on the LARS variant.} Therefore, the LARS variant of the Lasso-path algorithm can be considered as a useful alternative due to improved efficiency, also in the multi-penalty framework.

\section{Numerical Realization}
\label{se:heuristics}

The computation of the tiling based on the outlined method raises two main numerical
difficulties. First, for the computation of the subdivision of the
$\beta$-range $S_\tau$ of the tiling, it is
necessary to find the parameters $\beta$ for which
$((B_{\beta,I_\tau}^TB_{\beta,I_\tau})^{-1}\sigma_\tau)_i = 0$ for
some $i \in I_\tau$. Secondly, we have to solve equations of the form
$\tilde{\alpha}_{j,\gamma}(\tau,\beta) =
\tilde{\alpha}_{i,\delta}(\tau,\beta)$ for $\beta$. In the following,
we will describe numerical methods for the solutions of these problems
using heuristics concerning the behavior of the functions involved.

\subsection{Computation of the subdivision of $S_\tau\setminus T_\tau$}
\label{se:subdivision}

We assume now that we are given an incomplete tile $\tau$ and a subset
$S_\tau\setminus T_\tau$ of
$S_\tau = (\beta^-(\tau),\beta^+(\tau))$ on which we want to determine the
children of $\tau$.
In order to simplify the further notation, we assume that this subset
is the whole interval $(\beta^-(\tau),\beta^+(\tau))$. According to the argumentation in
Section~\ref{se:finding_successors}, it is then necessary to find the values of $\beta$ for
which
\begin{equation}\label{eq:contreq}
s_{\tau,i}(\beta) := ((B_{\beta,I_\tau}^TB_{\beta,I_\tau})^{-1}\sigma_\tau)_i = 0
\end{equation}
for some $i \in I_\tau$. In other words, we have to find the
parameters $\beta$ for which the function $s_{\tau,i}$ changes
sign. In order to find these points efficiently, we make the following
assumption:

\begin{postulate}\label{post:contreq}
  For any given tile $\tau$ and any $i \in I_\tau$, the
  equation~\eqref{eq:contreq} holds for at most one parameter $\beta
  \in (\beta^-(\tau),\beta^+(\tau))$.
\end{postulate}

If Assumption~\ref{post:contreq} holds, it is possible to
compute the points where~\eqref{eq:contreq} is satisfied by a simple
bisection method. To that end, we compute first for each index $i \in I_\tau$
the values $s_{\tau,i}(\beta^\pm(\tau))$. In case the signs of
$s_{\tau,i}(\beta^\pm(\tau))$ are different, there is some (unique) point
$\beta^-(\tau) < \beta_i < \beta^+(\tau)$ for which $s_{\tau,i}(\beta_i) =
0$. This point can be found by a bisection method.

Note that $\alpha s_{\tau,i}(\beta)$ is precisely the amount of
shrinkage that is applied to $u_{\beta,\alpha,i}$ because of the Lasso
regularization. Since this amount should usually increase with
$\beta$, we would expect that the functions $s_{\tau,i}(\beta)$ are
usually monotoneous in $\beta$. Assumption~\ref{post:contreq} is even
weaker than this monotonicity, and thus we regard it as rather mild.

\subsection{Computation of children}

Assume that we are given a tile $\tau$ and a subinterval $(\beta^-,\beta^+)$
of its $\beta$-range. 
Following the method in Section~\ref{se:finding_successors},
we have to find all the parameters $\tilde{\beta} \in [\beta^-,\beta^+]$,
where the index 
\[
j_{\tau}(\beta) := \argmax_{j \in \mathcal{K}(\tau,\beta)} \tilde{\alpha}_{j}(\tau,\beta)
\]
changes. Moreover, we can assume that the set $\mathcal{K} := \mathcal{K}(\tau,\beta)$
is independent of $\beta \in [\beta^-,\beta^+]$ and that the functions $\tilde{\alpha}_j(\tau,\beta)$,
$j \in \mathcal{K}$ are continuous on $[\beta^-,\beta^+]$.

In order to simplify the computations, we will make the following assumption:

\begin{postulate}
\label{post:heuristics}
For every index $k$, the sets $\{\beta \in [\beta^-,\beta^+] : k \in j_\tau(\beta)\}$
are either empty or intervals.
\end{postulate}

This assumption ensures that any specific index $j$ can only contribute 
to the maximizing envelope of the functions $\tilde{\alpha}_j(\tau,\cdot)$
on a connected set. As a consequence, if we are given $\beta^{(\ell)} < \beta^{(r)} \in [\beta^-,\beta^+]$
such that $j \in j_{\tau}(\beta^{(\ell)})$ and $j \in j_{\tau}(\beta^{(r)})$,
then we can immediately conclude that $j = j_{\tau}(\beta)$ on the whole interval
$(\beta^{(\ell)},\beta^{(r)})$.
This leads to the following divide-and-conquer approach:

For computing $j_\tau(\beta)$ on an interval
$[\beta^{(\ell)},\beta^{(r)}] \subset [\beta^-,\beta^+]$,
we compute first for all $j \in \mathcal{K}$ the values
\[
\tilde{\alpha}^{(l)}_j := \tilde{\alpha}_j(\tau,\beta^{(l)})
\qquad\text{ and }\qquad
\tilde{\alpha}^{(r)}_j := \tilde{\alpha}_j(\tau,\beta^{(r)}),
\]
and then set
\[
j^{(l)} := \argmax_{j\in\mathcal{K}} \tilde{\alpha}^{(l)}_j
\qquad\text{ and }\qquad
j^{(r)} := \argmax_{j\in\mathcal{K}} \tilde{\alpha}^{(r)}_j
\]
In case the index $j$ for which one of the maxima is attained is not unique,
we choose some sufficiently small $\epsilon > 0$ ($\epsilon \ll (\beta^{(r)}-\beta^{(l)})$)
and repeat the calculation at the point $\beta^{(l)} + \epsilon$ or $\beta^{(r)}-\epsilon$, respectively.

If $j^{(l)} = j^{(r)}$, then, according to Postulate~\ref{post:heuristics}, the index
$j_{\tau}(\beta)$ equals $j^{(l)}$ on the whole interval $(\beta^{(l)},\beta^{(r)})$,
and the computation is finished.
Else, we choose some $\beta^{(m)}$ with $\beta^{(l)} < \beta^{(m)} < \beta^{(r)}$,
and separately compute $j_{\tau}(\beta)$ with the same procedure
on the two intervals $[\beta^{(l)},\beta^{(m)}]$ and $[\beta^{(m)},\beta^{(r)}]$.

In order to define $\beta^{(m)}$, we employ the following strategy:
Usually, we set $\beta^{(m)} := (\beta^{(l)}+\beta^{(r)})/2$, that is,
we use a simple bisection method.
If, however, the index $j^{(r)}$ is such that $\tilde{\alpha}_{j^{(r)}}(\tau,\beta^{(l)})$
attains the second largest value within $\mathcal{K}$ and simultaneously
the index $j^{(l)}$ is such that $\tilde{\alpha}_{j^{(l)}}(\tau,\beta^{(r)})$ attains
the second largest value within $\mathcal{K}$ as well,
then we set $\beta^{(m)}$ to be any solution of the equation
\[
\tilde{\alpha}_{j^{(l)}}(\tau,\beta) = \tilde{\alpha}_{j^{(r)}}(\tau,\beta)
\qquad\text{ with } \beta \in (\beta^{(l)},\beta^{(r)}).
\]
In order to find such a solution, we use the secant method, modified in
such a way that the iterates stay within the interval $(\beta^{(l)},\beta^{(r)})$,
in which we can guarantee the existence of a solution.

\subsection{Computational complexity}
\label{sec:complexity}

In the following we will briefly discuss the computational complexity
of the proposed method. Here we assume that $m \le n$ (typically we
assume that $m$ is significantly smaller than $n$) and that we are only interested
in finding tiles for which the corresponding solution has a sparsity level
of at most $s_{\max} \ll m$.
Moreover, we note that the cost of the potential merging of tiles as described
in Section~\ref{se:algorithm_merging} is negligible compared to the cost
of actually finding the children and thus will be ignored in the further discussion.

To find the children of a specific tiling element $\hat{\tau}$, we need to calculate 
matrices $B_\beta$ respectively $B_{\beta, I}$ for some index set $I$ several times 
for numerous values of $\beta$. If we initially calculate the singular value decomposition of 
$A A^T \in \mathbb{R}^{m \times m}$, the full matrix $B_{\beta}$ can be computed afterwards 
in $\mathcal{O}(m^2 n)$ operations. Sub-sampled versions, e.g $B_{\beta, I}$, can be computed 
in $\mathcal{O}(m^2 \lvert I\rvert)$ steps. The initial singular value decomposition has to be performed once 
and has a cost of $\mathcal{O}(m^2n)$ as well.

Assume now that we want to compute the children of a given single tile $\hat{\tau}$
with support size $s = \lvert I_\tau \rvert$. Then we have to perform the following
procedures:

\begin{itemize}
\item For the computation of the subdivision $S_\tau$, it is necessary to
  solve the equation $s_{\tau,i}(\beta) = 0$ for each $i \in I_\tau,$ see~\eqref{eq:contreq}.
  Each evaluation of this function requires the computation of the matrix
  $B_{\beta,I_\tau}^TB_{\beta,I_\tau}$, which takes $\mathcal{O}(m^2s)$ operations,
  and then the solution of a linear system in $s$ variables, which we may assume takes
  a constant number of iterations up to a given accuracy.
  Since at most $s$ such equations have to be solved, the total number
  of iterations amounts to $\mathcal{O}(m^2 s^2)$.
  Note, however, that this is a worst case scenario, as this calculation
  is only necessary, if the signs of the function $s_{\tau,i}(\beta)$
  on the boundaries of the $\beta$-range of the tile $\tau$ differ.
  In practice, this occurs only rarely, and thus the computational costs of
  this step are much smaller.
  Moreover, for the LARS variation, the computation of $S_\tau$ is not performed at all.
\item For the actual computation of the children of the tile $\tau$,
  it is necessary to evaluate the functions $\tilde{\alpha}_j(\tau,\beta)$
  for different parameters $\beta$.
  The evaluation of a single such function takes $\mathcal{O}(m^2 s)$ operations,
  while the simultaneous evaluation of all these functions can be performed
  in $\mathcal{O}(m^2n)$ steps.
  Again, we can assume that we need a constant number of iterations in
  order to find the $\beta$ range for a cild for a given
  precision. Thus the total cost of this step will be
  $\mathcal{O}(m^2n)$ times the number of preliminary children that
  are produced.
\end{itemize}

In total, the number of operations for a given tile is of order
$\mathcal{O}(m^2 n)$ times the number of preliminary children that are
found. For the total cost, this has then to be multiplied with the
number of tiles that are processed. The latter is strongly dependent
on $s$, but also on the type of the measurement matrix. 

Compared to the original Lasso-path algorithm, which has a complexity
of $\mathcal{O}(smn)$, and also pLasso with a numerical complexity of
$\mathcal{O}(m^2n)$, the proposed method is therefore
more expensive. However, as the numerical experiments in Section \ref{se:numerical_experiments} indicate, the method leads to the improved accuracy and recovery rates. Therefore,  
the increased computational effort could be worthwhile. 

\section{Numerical experiments}
\label{se:numerical_experiments}
In this Section we provide extensive numerical experiments\footnote{Jupyter notebooks to the conducted experiments can be found at \url{https://github.com/soply/mp_paper_experiments}. The source code for conducting the experiments can be found in the repositories \url{https://github.com/soply/sparse_encoder_testsuite} and \url{https://github.com/soply/mpgraph}.}  to illustrate the effectiveness and robustness of our approach, compared to its single-penalty counterparts and the discretized multi-penalty approach.

In our experiments, we consider the model problem
\begin{equation*}
A(u^\dagger+v) + \delta = y,
\end{equation*}
where $A \in \R^{m \times n}$ is a linear measurement matrix, $u^\dag$ is a sparse vector, $v^\dag$ is a signal noise vector, and $\delta$ is a measurement noise vector. We consider three types of random measurement matrices, corresponding to different compressed sensing settings: Gaussian random matrices, partial random circulant matrices \cite{gray2006toeplitz}, and Gamma/Gaussian matrices \cite{jiaro15}. For each matrix type and each configuration as detailed below, we run 100 randomly generated problems and compare the  multi-penalty framework to commonly used compressive sensing methods. In particular, we compare to orthogonal matching pursuit (OMP, \cite{tropp2007signal}), $\ell_1$-regularization realized by the Lasso-path algorithm (LASSO, \cite{tibshirani2013lasso}), the basic iterative hard thresholding method \cite{blumensath2009iterative} with a warm start (L1IHT, \cite{arfopeXX})  and the preconditioned Lasso-path algorithm (pLASSO, \cite{jiaro15}). 

To compare the performance and not worry about model selection for other decoders, we assume that a support-size oracle of the solution is given. Thus, for OMP and L1IHT, we have a single support candidate than can be assessed against the true support $I^\dagger = \supp{u^\dagger}$. For the LASSO, and pLASSO, we may in some cases obtain multiple supports, and we choose the closest fit   to $I^\dagger$ according to the symmetric difference to assess their performance. For the multi-penalty framework, we obtain in general several supports for the prescribed support size. To get detailed insights into the performance, we thus record two results. First, we check the theoretical upper performance limit by choosing the support that is closest to $I^\dagger = \supp{u^\dagger}$. These results are labeled as MPLASSO (All), and they confirm and extend experiments that have been conducted in \cite{naumpeter}. Secondly, we record a more realistic performance limit by choosing a support according to the rule
\begin{equation}
\label{eq:selection_criterion}
I^* = \argmax\left\{\frac{\min\limits_{i \in \tilde{I}}|u_{\tilde{I}}|}{||v_{\tilde{I}}||_{\infty}} : \# \tilde{I} = s \right\},
\end{equation}
where the maximum is taken over all supports with matching support size $s$ in the respective solution path, and $u_{\tilde{I}}$, $v_{\tilde{I}}$ are obtained by the least-squares regressions
\begin{equation*}
u_{\tilde{I}} = \argmin\limits_{u \in \mathbb{R}^{\# \tilde{I}}} \left\Vert A_{\tilde{I}} u - y\right\Vert_2^2 \text{ and } 
v_{\tilde{I}} = A^\dagger(y - A_{\tilde{I}} u_{\tilde{I}}).
\end{equation*}
Here, $A^\dagger$ is the pseudo-inverse of $A$. 

Using this criterion, the regularization parameters are chosen adaptively to the given data. These results are labeled as MPLASSO (Rank).  Let us stress here that a specific problem may allow to define better suited selection criteria than \eqref{eq:selection_criterion}. The advantage of our method is that any such criterion can be evaluated on the entire support tiling, making data-driven parameter choices possible.

We test the accuracy and efficiency of the considered methods with respect to the sparsity level, dimension of the problem, and different levels of the two types of signal noise.  In addition to these comparisons, we provide a clear indication on the importance of the proper choice of $\beta$ for multi-penalty regularization by comparing the success rate achieved with adaptively chosen parameters versus success rate achieved when $\beta$ is fixed and only $\alpha$ is chosen adaptively.

\paragraph{Data} For our experiments we will consider the following settings:
\begin{itemize}
\item The $\beta$-range in which the tiling is calculated is fixed to $(10^{-6}, 100)$ since we did not observe any changes in the support tiling for larger $\beta$-ranges.
\item The signal is a vector
$u^\dagger$ containing entries whose absolutes are uniformly sampled from a range $(c_{\textrm{min}}, c_{\textrm{max}}), c_{\textrm{min}} = 1.5$ and $c_{\textrm{max}} = 5$. A distinct entry is chosen at random and set to $c_{\textrm{min}}$ to ensure that the minimum is being taken. The sign pattern for the vector is created at random afterwards.
\item For the signal noise $v$, we sample entries from a uniform distribution on $[-0.2,0.2]$. 
\item For each configuration, we perform $100$ experiments and denote the averaged results.
\item The noise vector $\delta$ is a Gaussian random vector such that $\Vert \delta\Vert_2/\Vert y \Vert_2 = \sigma$. The default level is $\sigma = 0.02$, except when we explicitly change it in the experiments.
\end{itemize}

We present experiments with three types of measurement operators. The first type are Gaussian matrices,  created by drawing entries from a standard normal distribution and subsequently rescaling each column by $1/\sqrt{m}$.  The second type are partial random circulant matrices \cite{gray2006toeplitz},  which essentially reflect sampling processes in many practical applications where sampling processes is modeled by convolution with a random pulse \cite{FoRa13}. Such matrices are created by first taking a Rademacher sequence $b=(b_1,\ldots,b_n) \in \{\pm 1\}^n$ and afterwards creating the related circulant matrix $\tilde{A}_{ij} = b_{j-i \mod N} \in \mathbb{R}^{n \times n}$. 
The sensing matrix is then obtained by choosing $m$ columns of $\tilde{A}$ at random and rescaling the columns by $1/\sqrt{m}$.
The third type are Gamma/Gaussian matrices \cite{jiaro15}, which were used to present the problems of the pLasso \cite{jiaro15} for sampling operators whose singular values are spread over a far greater range, as compared to the other two measurement operators. The matrices are simulated as $A_{ij} = (G_i \backslash\alpha)Z_{ij}$ , where $Z_{ij}$ are normally distributed random variables and the $G_i$ are independent Gamma random variables with shape one and rate one. 

In order to assess the obtained results, we measure the  success rate (whether the correct support $I^\dagger$ is exactly attained by a specific method), as well as the number of elements in the symmetric difference (SD) by $\# (\supp(u) \Delta \supp(u^\dag)).$

\paragraph{Recovery rates for the varying support size} Figure \ref{fig:gaussian_exp1} shows the accuracy of the support recovery by different methods in case of Gaussian matrices. The experiments show that the multi-penalty framework (both with selection criterion and with perfect selection) always performs better than the single-penalty counterparts and even slightly better than pLasso. 
This supports the claim that the Lasso and pLasso support paths are incorporated in the multi-penalty solution space, i.e., that there exists a $\beta$ for which the multi-penalty approach resembles Lasso or the preconditioned version as described in Remark \ref{re:plasso}.
At the same time, the OMP achieves the best performance among all methods. If $A$ is random circulant matrix or Gamma/Gaussian matrix as illustrated in Figures \ref{fig:random_exp1} and \ref{fig:gamma_exp1}, we can observe a worse performance of OMP and other methods, whereas MPLASSO (All) and MPLASSO (Rank) demonstrate a superior performance.    
These results indicate that the multi-penalty framework is not influenced by different sampling operators as it is observed for other methods.\footnote{We ran equal experiments also for random Toeplitz matrices that are related to partial random circulant matrices and obtained similar results with respect to the performance of all sparse encoders. See  \url{https://github.com/soply/mp_paper_experiments} for the results.}

The presented results demonstrate that the performance of pLASSO deteriorates significantly for $A$ being Gamma/Gaussian matrix. This is because the non-zero part of the spectrum of Gamma/Gaussian matrices is spread over a far greater range than for the other two matrix types. The authors \cite{jiaro15} mention that pLasso also performs poorly if a Gaussian matrix is of dimension $m \approx n$ since the distribution of the spectrum follows the Marchenko-Pastur law with mass around zero. Thus, the results indicate a certain stability of the multi-penalty framework against various types of spectra of the sampling operator.

\begin{figure}[t]
\centering
    \begin{subfigure}[b]{0.495\textwidth}
        \includegraphics[width=\textwidth]{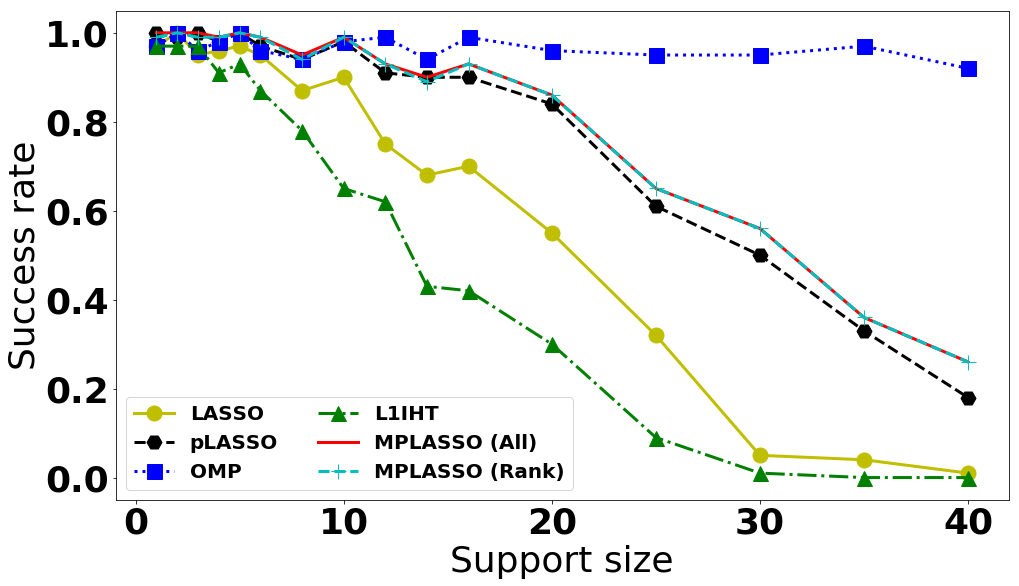}
        \caption{}
        \label{fig:gaussian_success_vs_snr_1}
    \end{subfigure}
    \begin{subfigure}[b]{0.495\textwidth}
        \includegraphics[width=\textwidth]{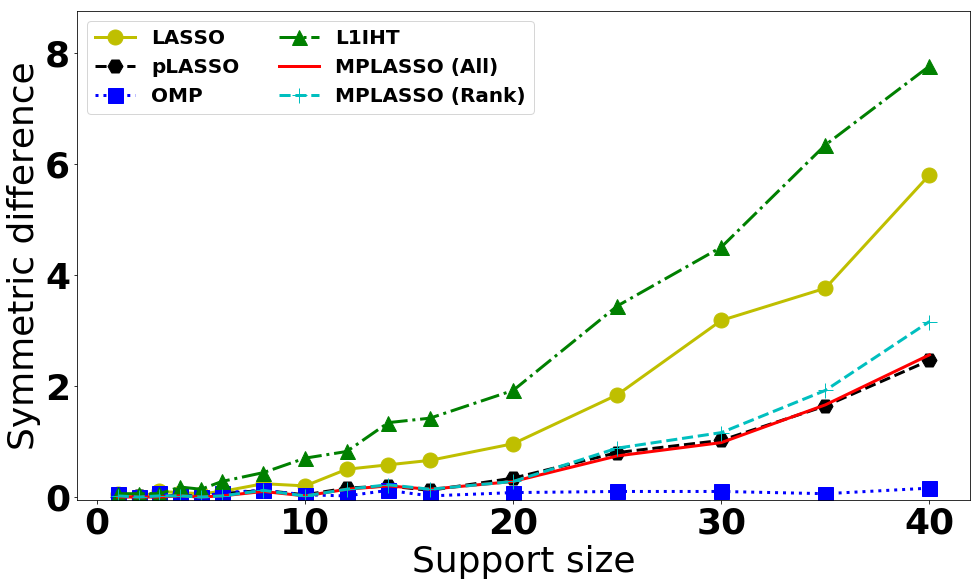}
        \caption{}
        \label{fig:gaussian_success_vs_signal_gap_1}
    \end{subfigure}
\caption{Accuracy of the support recovery for Gaussian random matrices $A \in \R^{600\times 2500}$ and varying support sizes $s$: (a) success rate (b) symmetric difference.}
\label{fig:gaussian_exp1}
\end{figure}

\begin{figure}[t]
\centering
    \begin{subfigure}[b]{0.495\textwidth}
        \includegraphics[width=\textwidth]{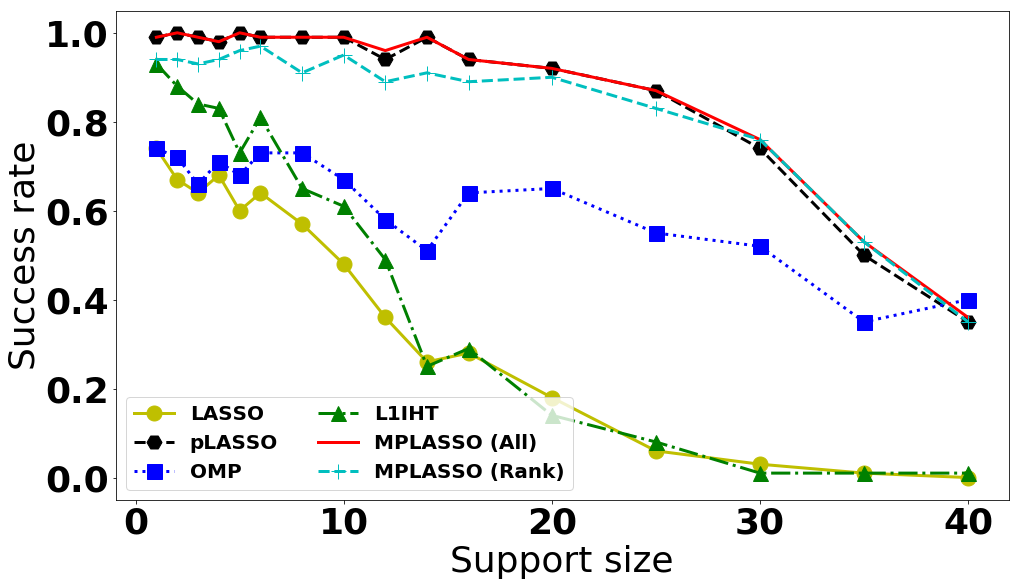}
        \caption{}
        \label{fig:random_success_vs_snr_1}
    \end{subfigure}
    \begin{subfigure}[b]{0.495\textwidth}
        \includegraphics[width=\textwidth]{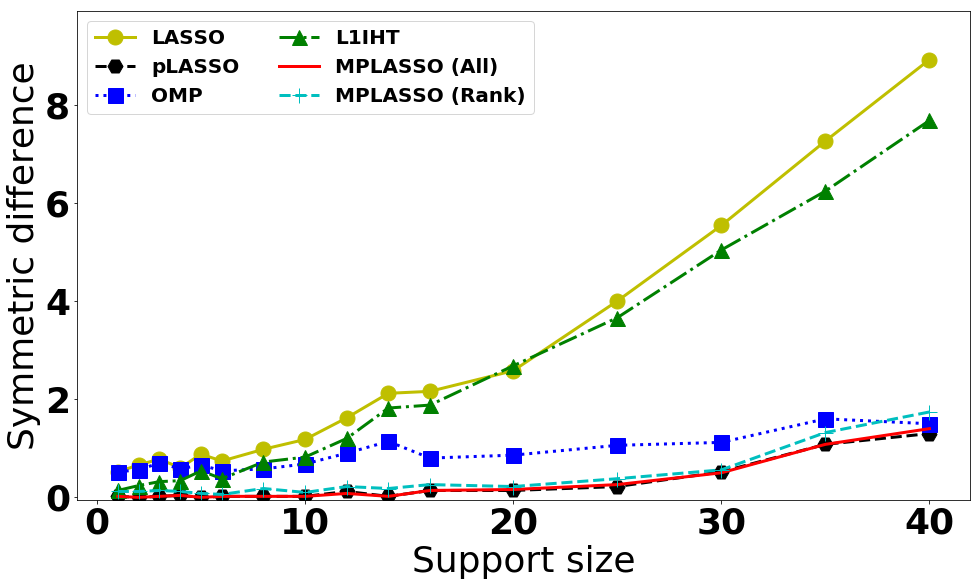}
        \caption{}
        \label{fig:random_success_vs_signal_gap_1}
    \end{subfigure}
\caption{Accuracy of the support recovery for random circulant matrices $A \in \R^{900\times 2500}$ and varying support sizes $s$: (a) success rate (b) symmetric difference.}
\label{fig:random_exp1}
\end{figure}

\begin{figure}[t]
\centering
    \begin{subfigure}[b]{0.495\textwidth}
        \includegraphics[width=\textwidth]{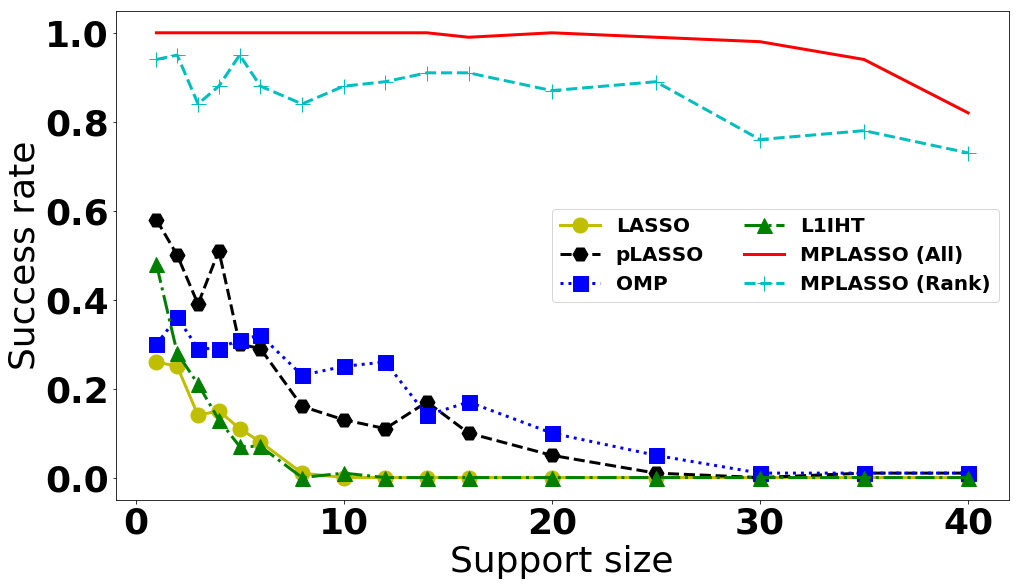}
        \caption{}
        \label{fig:gamma_success_vs_snr_1}
    \end{subfigure}
    \begin{subfigure}[b]{0.495\textwidth}
        \includegraphics[width=\textwidth]{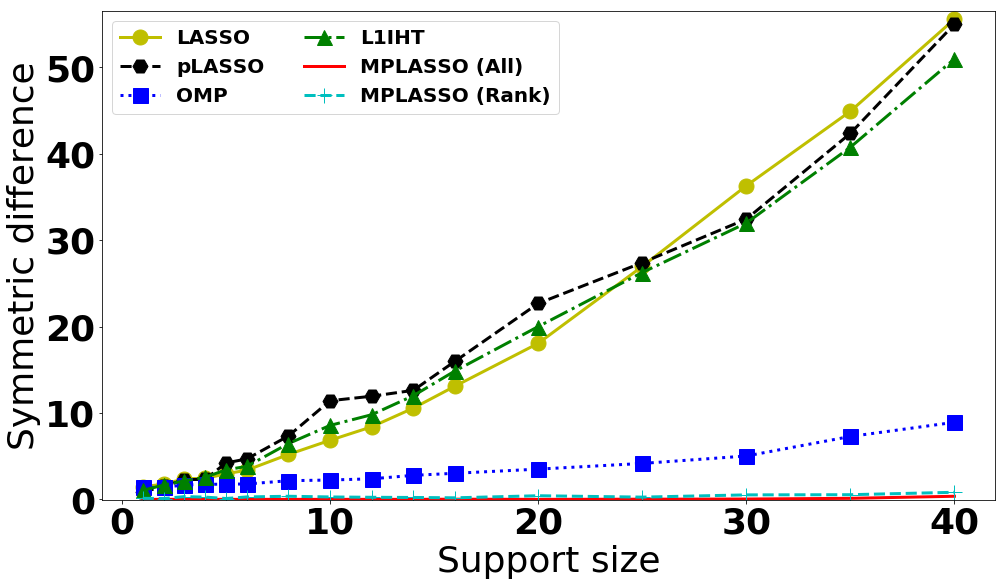}
        \caption{}
        \label{fig:gamma_success_vs_signal_gap_1}
    \end{subfigure}
\caption{Accuracy of the support recovery for Gamma/Gaussian matrices $A \in \R^{900\times 2500}$ and varying support sizes $s$: (a) success rate (b) symmetric difference.}
\label{fig:gamma_exp1}
\end{figure}

\paragraph{Recovery rates for the varying dimension} In these experiments, we study the influence of dimensionality on the reconstruction accuracy. Specifically, for three types of matrices we fix the number of measurements to $m =250$ and the sparsity size $s =20$, while varying $n$ as $n = 2^k$ for $k = 5, \ldots, 15$.  As indicated in Figures \ref{fig:gaussian_exp2}--\ref{fig:gamma_exp2}, we essentially observe a similar performance as for the varying support sizes. In the case of Gaussian random matrices, OMP is the best performing method. If we switch to other sampling operators, OMP is either strongly dependent on the matrix dimension (partial random circulant matrices) or its performance drops quickly (Gamma/Gaussian). Across all methods, we see that MPLASSO is most consistent with respect to different types of sampling operators, and dimensions of the matrices. Note that we do not observe the performance drop of pLasso for almost square Gaussian matrices here. This might be because the default noise level $\delta = 0.02$ is too small to severely deteriorate the performance. We refer to Section 4.1 of \cite{jiaro15} to see this phenomenon.

\begin{figure}[t]
\centering
    \begin{subfigure}[b]{0.495\textwidth}
        \includegraphics[width=\textwidth]{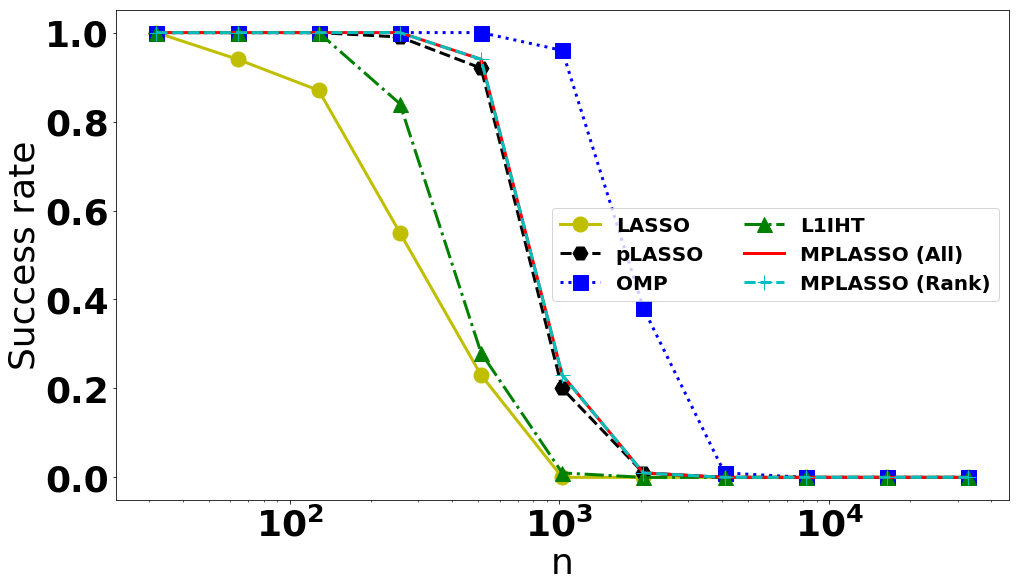}
        \caption{}
        \label{fig:gaussian_success_vs_dim_1}
    \end{subfigure}
    \begin{subfigure}[b]{0.495\textwidth}
        \includegraphics[width=\textwidth]{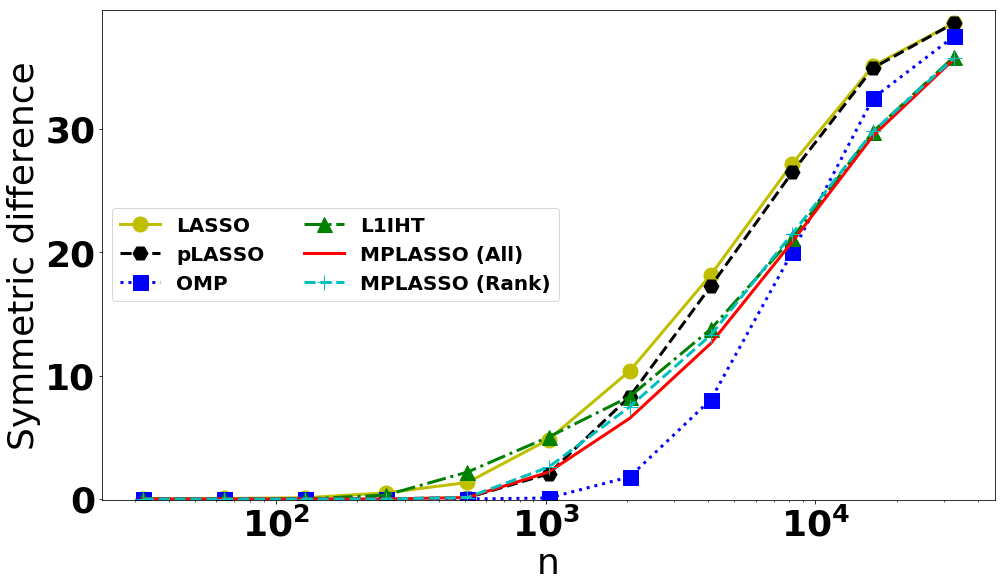}
        \caption{}
        \label{fig:gaussian_success_vs_dim_1}
    \end{subfigure}
\caption{Accuracy of the support recovery for Gaussian random matrices $A \in \R^{250 \times n}$ for $n = 32,\ldots, 32768$: (a) success rate (b) symmetric difference.}
\label{fig:gaussian_exp2}
\end{figure}

\begin{figure}[t]
\centering
    \begin{subfigure}[b]{0.495\textwidth}
        \includegraphics[width=\textwidth]{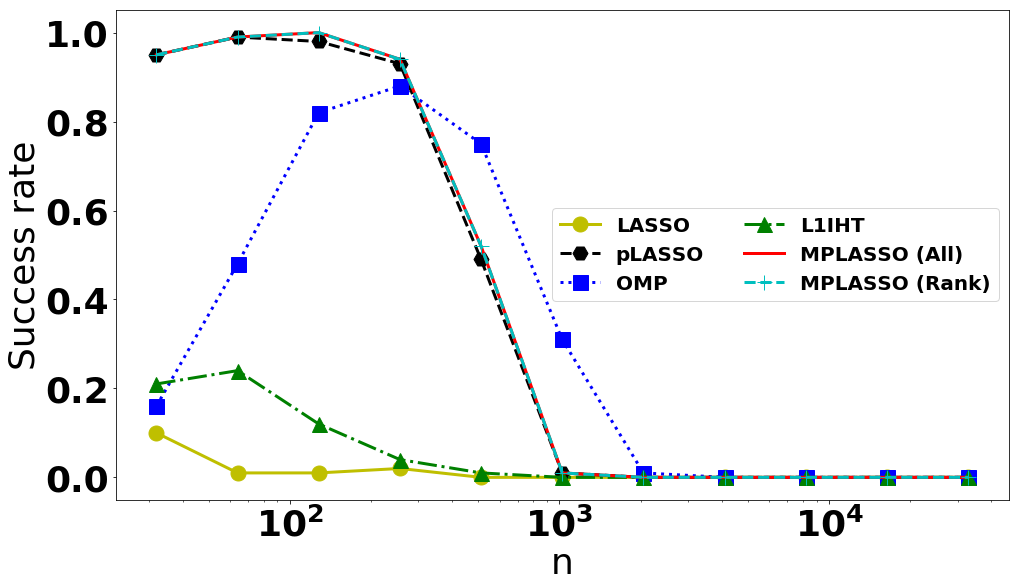}
        \caption{}
     \end{subfigure}
    \begin{subfigure}[b]{0.495\textwidth}
        \includegraphics[width=\textwidth]{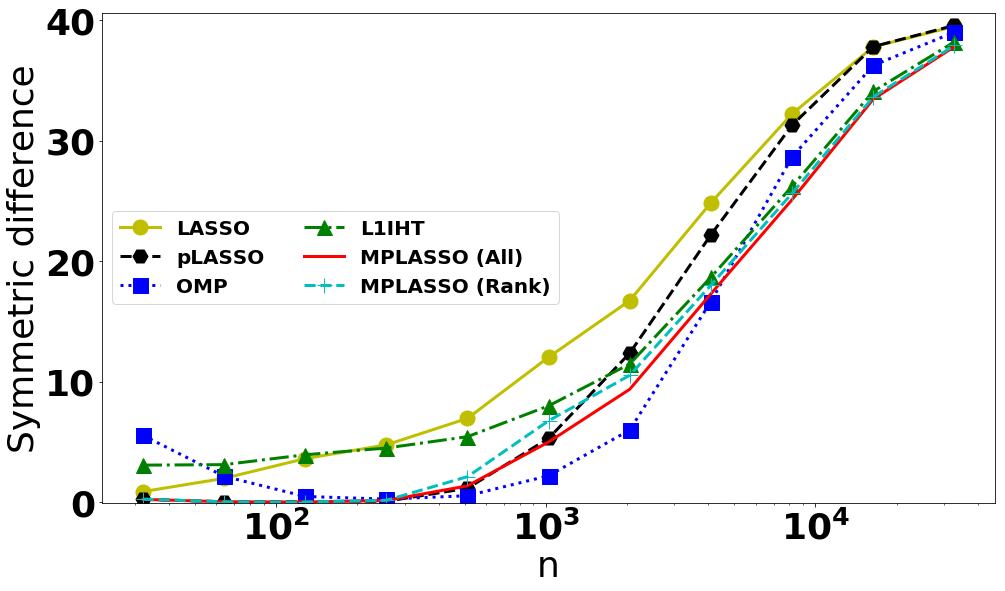}
        \caption{}
    \end{subfigure}
\caption{Accuracy of the support recovery for random circulant matrices $A \in \R^{250 \times n}$ for $n = 32,\ldots, 32768$: (a) success rate (b) symmetric difference.}
\label{fig:random_exp2}
\end{figure}

\begin{figure}[t]
\centering
    \begin{subfigure}[b]{0.495\textwidth}
        \includegraphics[width=\textwidth]{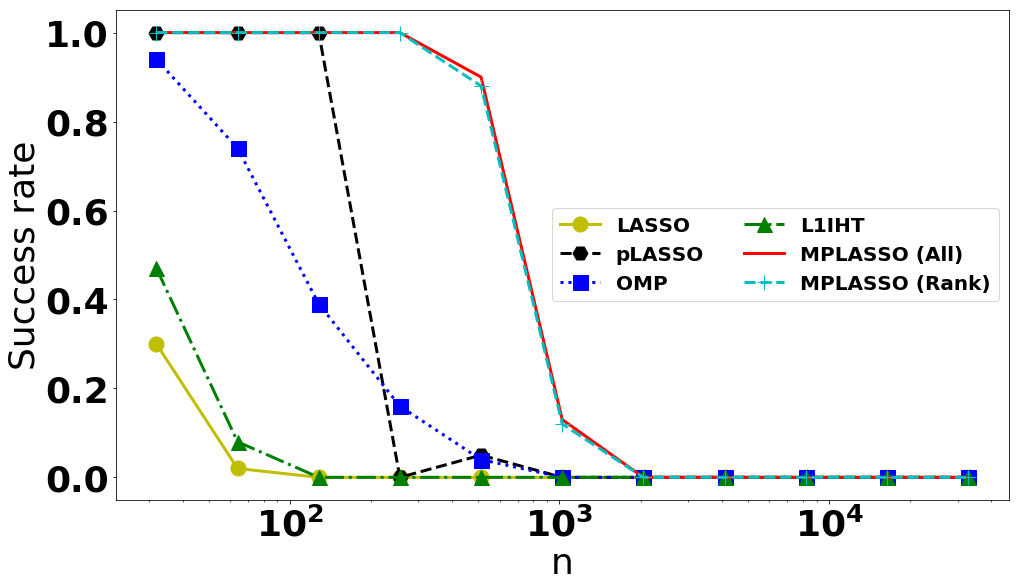}
        \caption{}
    \end{subfigure}
    \begin{subfigure}[b]{0.495\textwidth}
        \includegraphics[width=\textwidth]{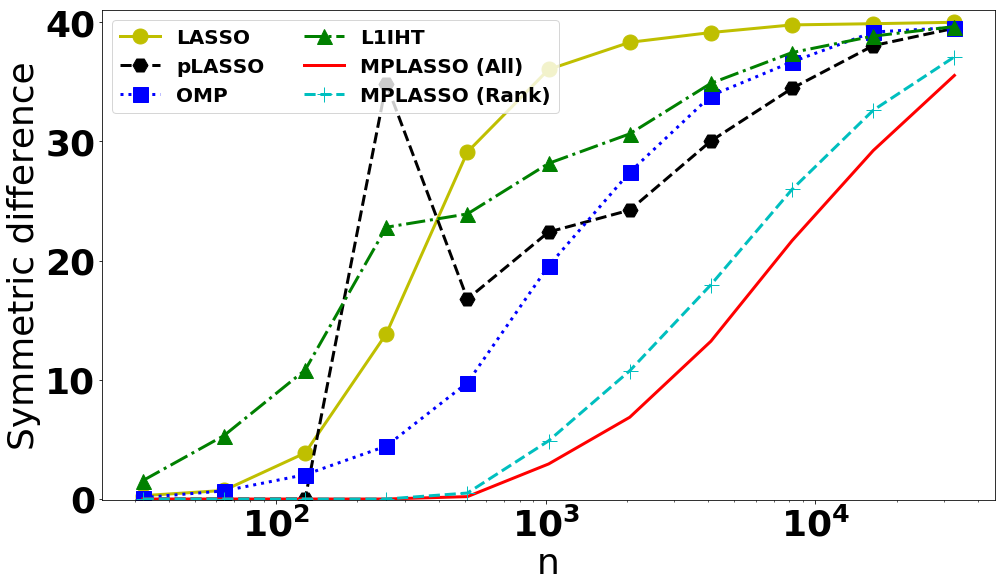}
        \caption{}
    \end{subfigure}
\caption{Accuracy of the support recovery for Gamma/Gaussian matrices $A \in \R^{250 \times n}$ for $n = 32,\ldots, 32768$: (a) success rate (b) symmetric difference.}
\label{fig:gamma_exp2}
\end{figure}

\paragraph{Recovery rates for the varying noise} Finally, we investigate the robustness of all methods with respect to measurement noise. In these experiments, we fix the support size $s =15$ and the matrix sizes as in the first set of experiments, while varying $\sigma \in [0,0.1]$. We again observe a similar performance as in the previous experiments though the multi-penalty framework performs similarly to OMP also for Gaussian matrices, see Figures \ref{fig:gaussian_exp3}--\ref{fig:gamma_exp3}. The observed pattern is retained by increasing the signal noise, i.e., the multi-penalty regularization as well as pLASSO have a superior performance compared to all other methods.

It is worth mentioning here that the results show that the noise level
$\sigma$ has almost no influence on the performance of the different methods
in case of Gaussian of random circulant matrices. This might be due to the fact
that these matrices are essentially well-conditioned in the sense that its
singular values are bounded away from zero and well concentrated.
As a consequence, we can expect that there is almost no directional dependance of the distribution
of the noise vector $Av$. Adding the relatively small noise vector $\delta$ will thus only 
have a rather small influence on the total noise level.

In contrast, the Gamma/Gaussian matrices have very small singular values,
which implies that there will be directions in which the multiplication
with $A$ concentrates the uniformly distributed noise vector $v$ very
tightly around zero. In these directions, the addition of $\delta$ is
noticeable already for relatively small variances $\sigma$, and thus
the influence of $\sigma$ on the results is stronger.

\begin{figure}[t]
\centering
    \begin{subfigure}[b]{0.495\textwidth}
        \includegraphics[width=\textwidth]{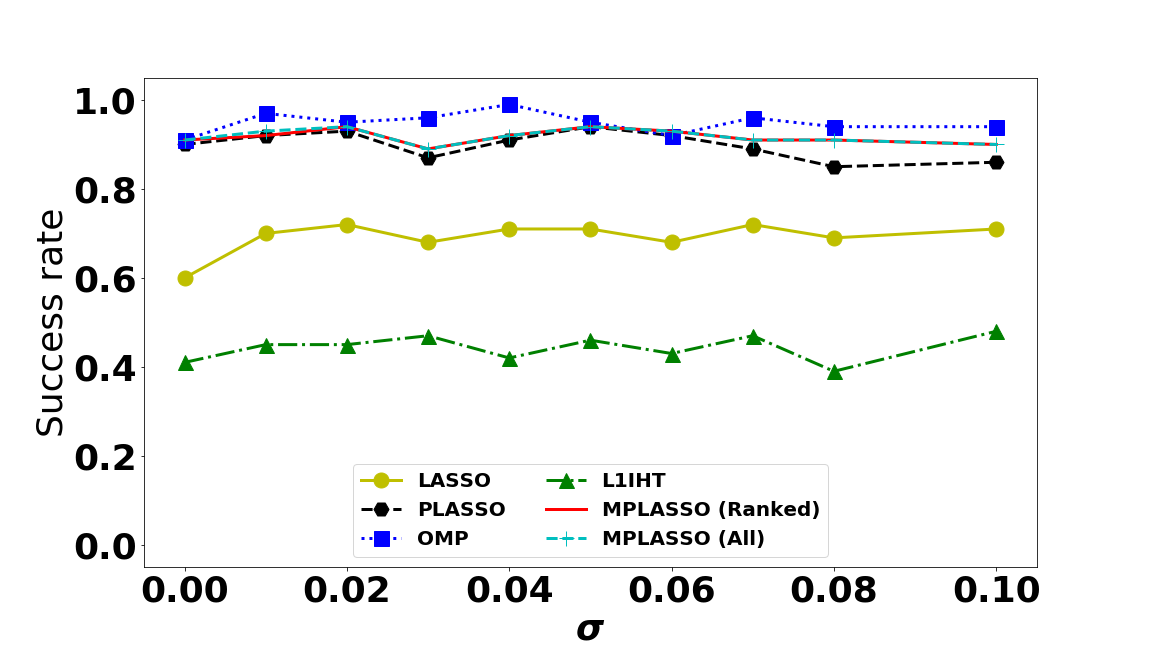}
        \caption{}
    \end{subfigure}
    \begin{subfigure}[b]{0.495\textwidth}
        \includegraphics[width=\textwidth]{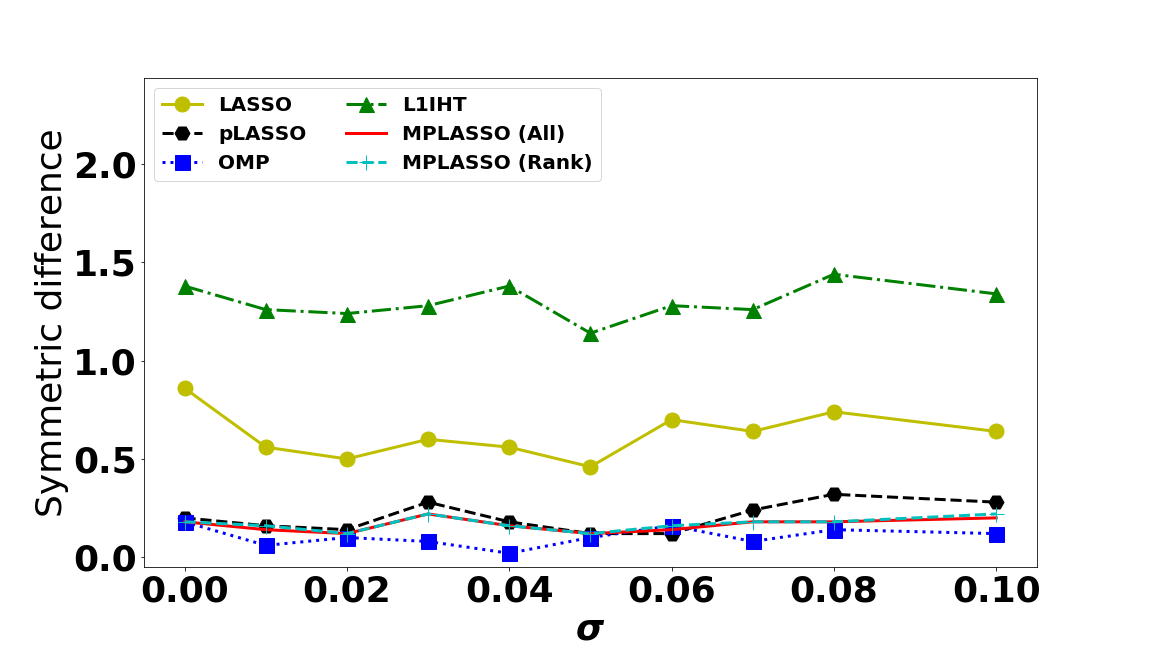}
        \caption{}
    \end{subfigure}
\caption{Accuracy of the support recovery for Gaussian random matrices $A \in \R^{600\times 2500}$ and varying measurement noise $\sigma$: (a) success rate (b) symmetric difference.}
\label{fig:gaussian_exp3}
\end{figure}

\begin{figure}[t]
\centering
    \begin{subfigure}[b]{0.495\textwidth}
        \includegraphics[width=\textwidth]{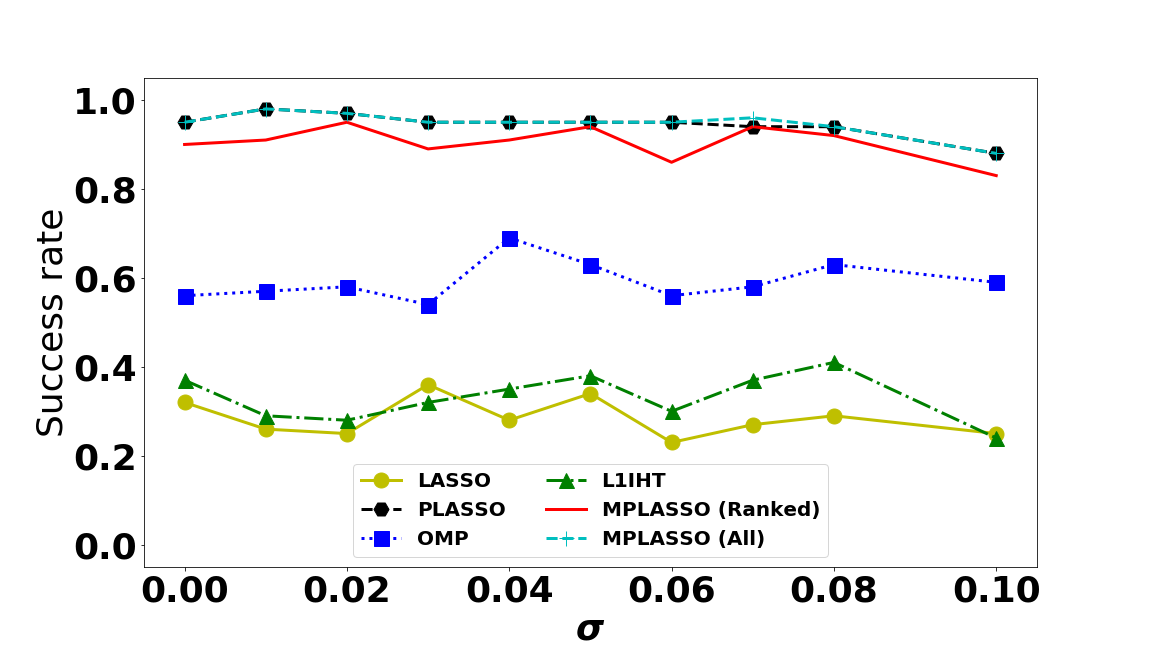}
        \caption{}
     \end{subfigure}
    \begin{subfigure}[b]{0.495\textwidth}
        \includegraphics[width=\textwidth]{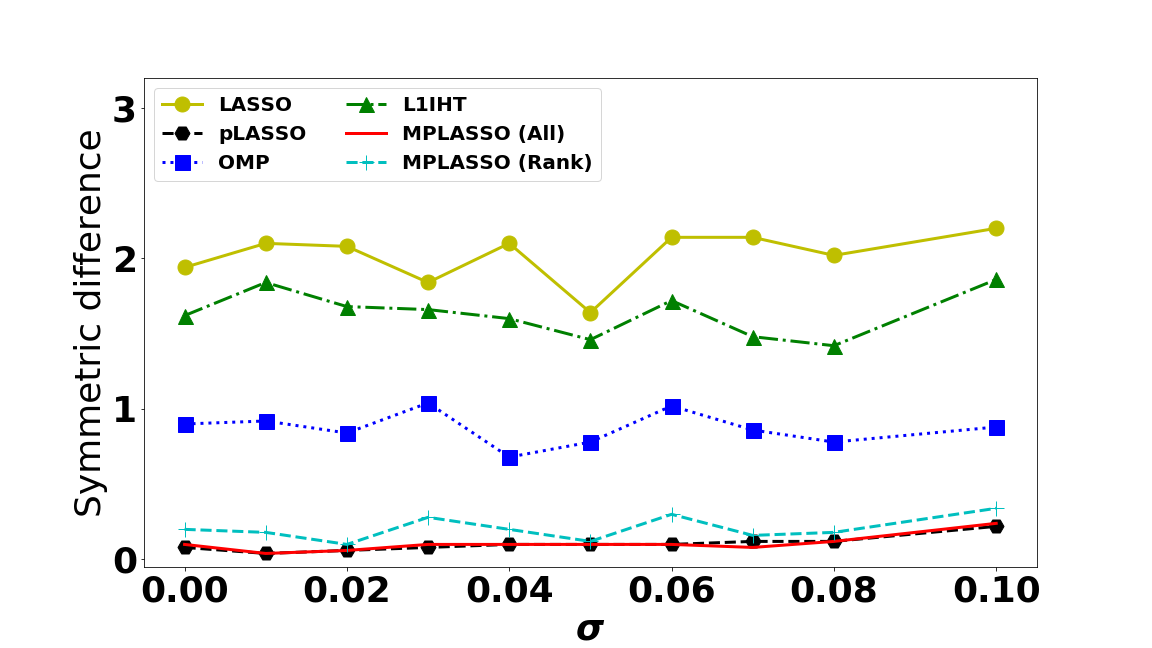}
        \caption{}
     \end{subfigure}
\caption{Accuracy of the support recovery for random circulant matrices $A \in \R^{900\times 2500}$ and varying measurement noise $\sigma$: (a) success rate (b) symmetric difference.}
\label{fig:random_exp3}
\end{figure}

\begin{figure}[t]
\centering
    \begin{subfigure}[b]{0.495\textwidth}
        \includegraphics[width=\textwidth]{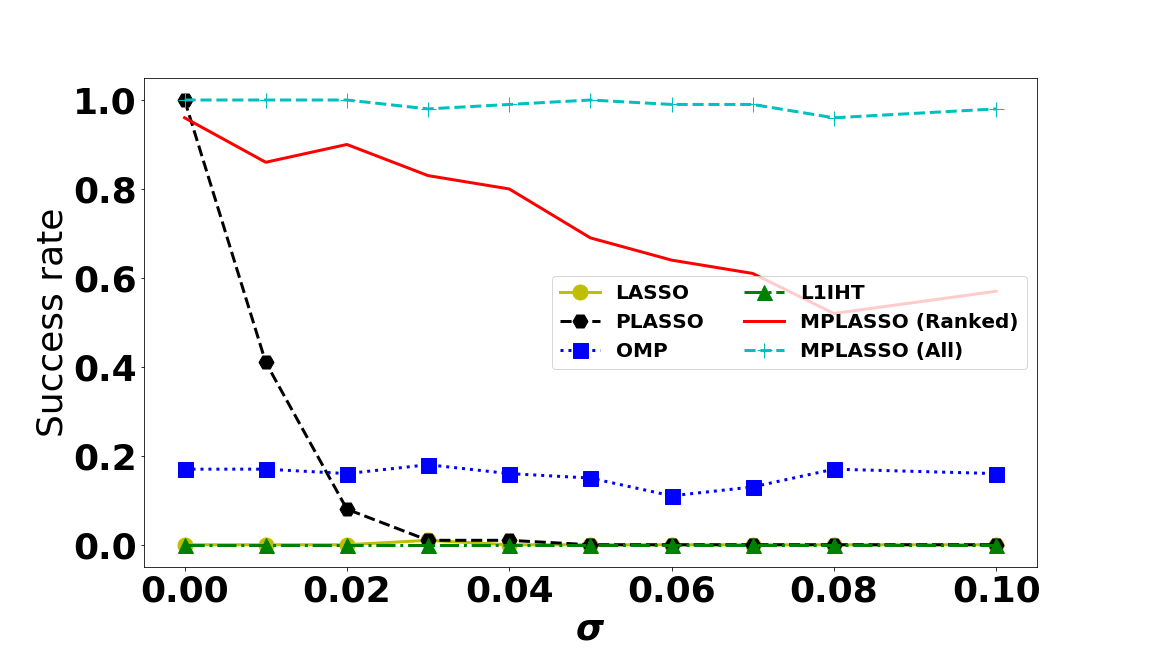}
        \caption{}
    \end{subfigure}
    \begin{subfigure}[b]{0.495\textwidth}
        \includegraphics[width=\textwidth]{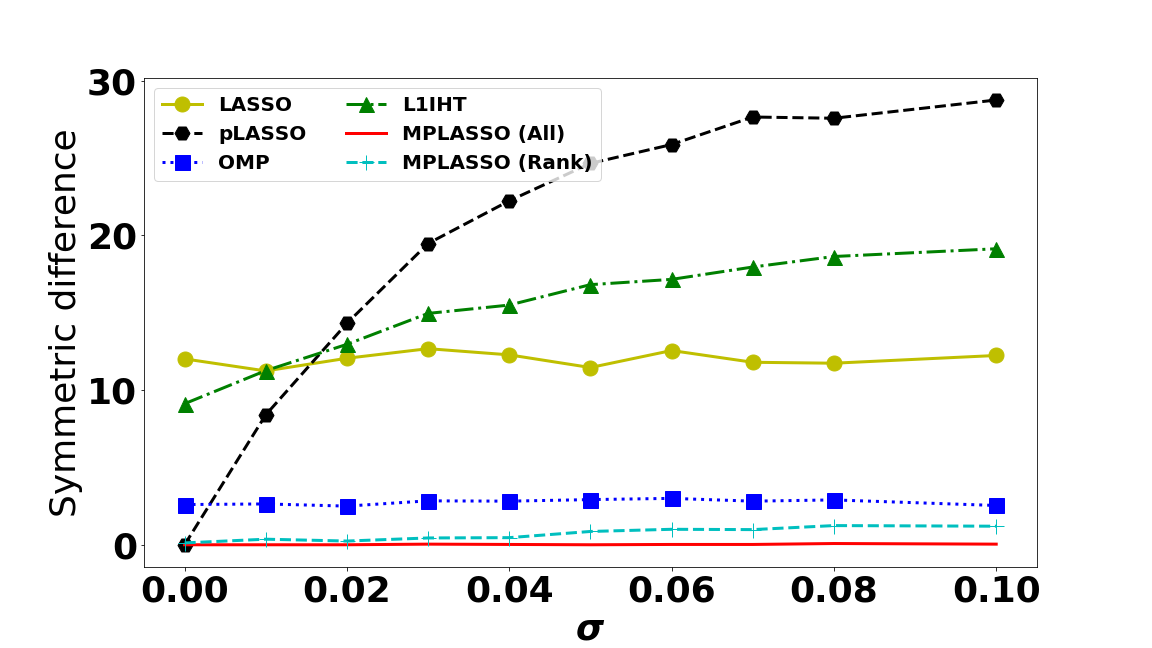}
        \caption{}
    \end{subfigure}
\caption{Accuracy of the support recovery for Gamma/Gaussian matrices $A \in \R^{900\times 2500}$ and and varying measurement noise $\sigma$: (a) success rate (b) symmetric difference.}
\label{fig:gamma_exp3}
\end{figure}

\paragraph{Recovery rates for fixed/adaptively chosen $\beta$} Apart from comparing the introduced multi-penalty framework to state-of-the-art regularization methods, we also compare it to the multi-penalty regularization with a fixed $\beta$, or, in other words, we study the necessity of an adaptive $\beta$ choice for the optimal performance of the method. In Figure \ref{fig:optimal_beta} (a) and (b), we show in horizontal lines the upper performance limit given by the result of MPLASSO (All) for a specific experiment. The curves indicate the performance of the multi-penalty method using a fixed $\beta$-choice, that is ranging on the $x$-axis.

In Figure \ref{fig:optimal_beta} (a), we see that an adaptive choice is essentially not necessary because the multi-penalty method with very small $\beta \approx 0$ works quite well across all noise levels. This matches our previous results since we saw that pLASSO and MPLASSO perform similar on partial random circulant matrices with dimensions $m \ll n$. For Gamma/Gaussian matrices in \ref{fig:optimal_beta} (b) however, we observe that an adaptive choice is necessary. There exists no single $\beta$ that yields the upper performance limit across all noise levels. Even if the noise level is fixed and sufficiently large, there is no $\beta$ that reaches the upper limit for all realizations of the experiment.

The observed results are similar for different support size and matrix dimensions, and thus
we only presented one specific matrix configuration. The results of further numerical
experiments can be found in the Jupyter notebooks.

\begin{figure}[t]
\centering
    \begin{subfigure}[b]{0.49\textwidth}
        \includegraphics[width=\textwidth]{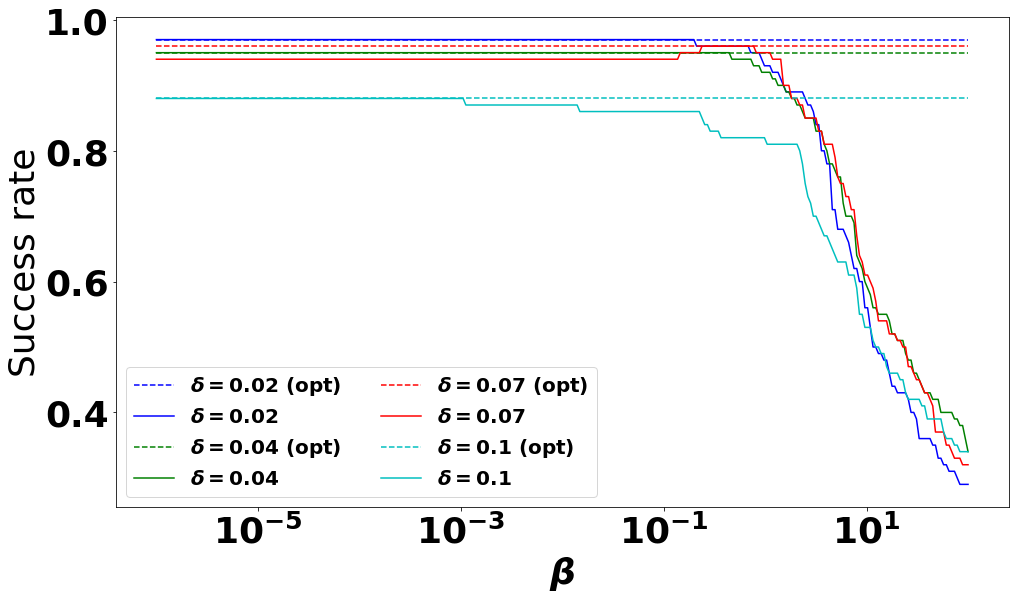}
        \caption{}
    \end{subfigure}
    \begin{subfigure}[b]{0.49\textwidth}
        \includegraphics[width=\textwidth]{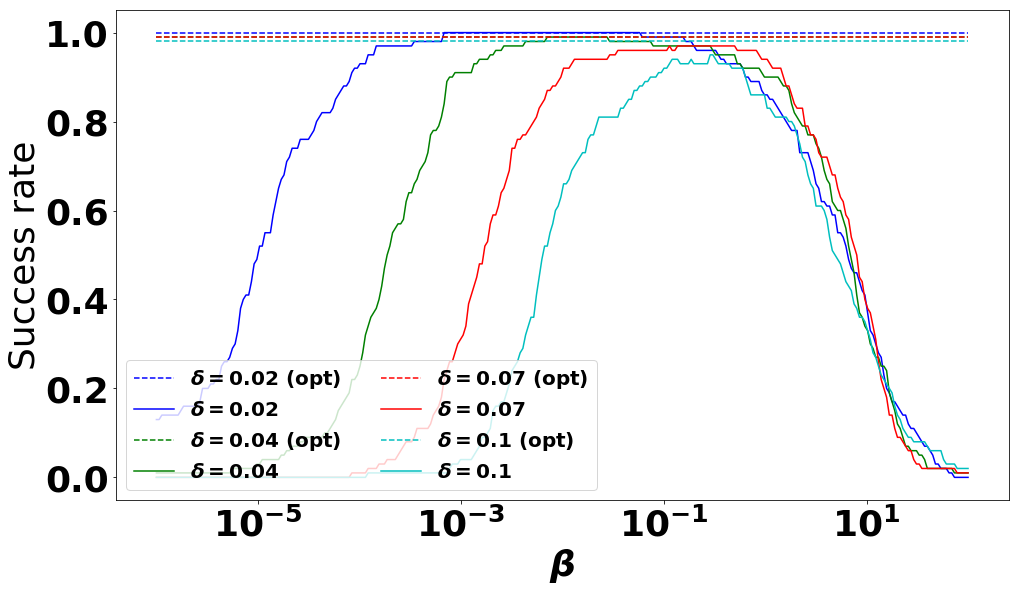}
        \caption{}
    \end{subfigure}
\caption{Success rate of the support recovery for varying noise levels and fixed versus adaptively chosen $\beta$. The horizontal line shows the upper performance limit, extracted from the results of MPLASSO (all), while the curve shows the performance if we choose a fixed $\beta$ across all experiments. $\beta$ is ranging on the $x$-axis. (a) random circulant matrices $A \in \R^{900\times 2500}$; (b) Gamma/Gaussian matrices  $A \in \R^{900\times 2500}$.}
\label{fig:optimal_beta}
\end{figure}

\medskip

Overall, our experiments show that the multi-penalty framework yields a robust method to solve general compressed sensing problems. In cases where $\ell_1$-based methods like Lasso are preferred over greedy methods such as OMP, the multi-penalty framework consistently yields a better performance than conventional alternatives. Admittedly, this is expected because the solution spaces of Lasso are incorporated into multi-penalty functional for $\beta \rightarrow \infty$. The experiments confirm that the introduced criterion for the correct support selection works well in many cases, such that our algorithm allows to consistently achieve equal or better performances than other sparse decoders. Only in case of a Gaussian sampling matrix, the greedy OMP is superior to the multi-penalty framework.

Consequently, the multi-penalty framework in combination with the support selection \eqref{eq:selection_criterion} is a reasonable approach to solving compressed sensing problems where additional signal noise $v$ affects the signal $u$ before the sampling procedure.

\section{Conclusion and future directions}
\label{sec:discussion}

Inspired by a challenging problem of support recovery and building upon the recent advances in regularization theory, signal processing, and statistics, we have presented a novel algorithmic framework for finding a solution of the unmixing problem by means of multi-penalty regularization. Unlike classical approaches for parameter learning, where the choice is made by discretizing the parameter space, we first compute all possible sufficiently sparse solutions, attainable from the given datum $y$, and then apply standard regression for the accurate reconstruction of the non-zero components. The advantage of this framework is that we obtain an overview of the solution stability over the entire range of parameters, which can be used for obtaining insights into the problem or to investigate other parameter learning rules. 

We show and exemplify by experiments that our method can be interpreted as interpolation between the standard and pre-conditioned Lasso. Therefore, the multi-parameter approach combines the advantages of both single-penalty regularization methods and mitigates their drawbacks. In particular, as demonstrated in the extensive experiments with different measurement operators, our algorithm outperforms its single-penalty counterparts for sufficiently sparse solutions and is cost-efficient for supports of medium size. Moreover, our method outperforms the greedy counterpart OMP for the random circulant and Gamma/Gaussian matrices. The proposed method shows robustness and stability against measurement and signal noise, whereas all other methods are not consistent across all settings.

We plan to investigate strategies for speeding up our method and, thus, making it computationally more efficient. In particular, we plan to introduce an adaptive discretization to allow for cost reduction in computing $B_\beta$ for various $\beta$ and support sizes. 
Moreover,  we plan to extend the approach to other types of signals and noise such as bounded variation signals, also considering higher-dimensional signals such as 2D and 3D images. 

All experiments can be reproduced using the freely available source code available in the Github repositories\footnote{See \url{https://github.com/soply/sparse_encoder_testsuite} and \url{https://github.com/soply/mpgraph}}. The code is well-documented, therefore we refer for further information to these repositories to run the presented experiments.

\end{document}